\documentclass{article}

\usepackage[final]{ewrl_2026}

\usepackage[utf8]{inputenc}
\usepackage[T1]{fontenc}
\usepackage{hyperref}
\usepackage{url}
\usepackage{xcolor}

\usepackage{amsmath}
\usepackage{amssymb}
\usepackage{mathtools}
\usepackage{mathrsfs}
\usepackage{amsfonts}
\usepackage{bm}

\usepackage{graphicx}
\usepackage{subcaption}
\usepackage[space]{grffile}
\usepackage{booktabs}
\usepackage{wrapfig}

\usepackage{nicefrac}
\usepackage{microtype}
\usepackage{doi}
\usepackage{physics}

\usepackage{algorithm}
\usepackage{algpseudocode}
\usepackage{forest}
\usepackage{tikz-cd}

\newenvironment{proof}{\paragraph{Proof:}}{\hfill$\square$}

\newtheorem{theorem}{Theorem}
\newtheorem{proposition}[theorem]{Proposition}
\newtheorem{lemma}[theorem]{Lemma}
\newtheorem{corollary}[theorem]{Corollary}

\newtheorem{assumption}[theorem]{Assumption}
\newtheorem{remark}[theorem]{Remark}

\title{The Price of Decentralization in Top-$K$ Arm Identification}

\author{%
  Larissa Xu \And
  Jasmine Nguyen \And
  William Chang \\
  Department of Applied Mathematics, University of California, Los Angeles, USA \\
  \texttt{\{xuzhiyun004119, jasmineng306, chang314\}@g.ucla.edu}
}

\begin{document}

\maketitle

\begin{abstract}
Cooperative teams often need to agree on the best \emph{few} options rather than simply accumulate reward, and they must do so while each member sees only a fragment of the team's collective experience. We study this as \emph{top-$K$ joint-arm identification} in multi-agent multi-armed bandits: at every round $M$ agents simultaneously choose individual actions that compose a joint arm, and the team must ultimately return the $K$ joint arms of highest mean reward. The difficulty is that an agent may not observe the actions of others, their rewards, or either. We treat three observability regimes---(A) shared rewards with hidden actions, (B) observed actions with private rewards, and (C) full asymmetry---and design communication-free elimination algorithms (\texttt{UCB-Intervals}) that reconstruct implicit coordination from whatever signal each regime leaves intact: a shared arm ordering in~(A), observable deviations in~(B), and enlarged confidence radii under~(C). We give matching analyses in both the fixed-budget and fixed-confidence objectives, then fold all three regimes into a single meta-guarantee indexed by a multiplicity $c$ and a consensus factor $\rho$. Our central result is quantitative rather than merely algorithmic: change-of-measure lower bounds show that shared-reward identification is optimal up to one universal logarithmic factor, and that the \emph{entire} statistical price of removing communication is a multiplicative $\rho^2$ in sample complexity---a fixed $4\times$ penalty under full asymmetry. The resulting stopping time scales as $O\!\left(\sum_{\bm{a}} \log(A^M/\delta)/\Delta_{\bm{a}}^{2}\right)$ and the fixed-budget error as $\exp(-\Theta(T/H_1))$, with the dependence on the joint-action count $A^M$ shown to be unavoidable.
\end{abstract}

\section{Introduction}

The problem of identifying the best arm in multi-armed bandit (MAB) problems is a well-studied area in machine learning. Introduced by \cite{audibert2010best}, the UCB-E and Successive Rejects algorithms address this under the fixed budget setting. \cite{bubeck2013multiple} extended Successive Rejects to identify multiple best arms, and \cite{even2006action} introduced the fixed-confidence setting where the goal is to identify the best arm with probability at least $1-\delta$ while minimizing sample complexity.

Many cooperative decisions are made by teams that must agree on a small set of joint configurations while each member sees only its own slice of the outcome. In decentralized scientific experimentation---drug-combination screening, materials discovery, or large-scale A/B testing across independently operated services---several groups each control one component of a joint intervention and observe only their own experimental readout, because of privacy, bandwidth, or institutional boundaries. Federated experimentation and privacy-preserving multi-site trials share the same shape: a common underlying effect is measured independently at each site, so the sites obtain independent noisy samples of the same joint configuration rather than a shared one. These applications motivate two features of our model that might otherwise look like modeling choices. First, the \emph{joint action space is unstructured} in the worst case: nothing forces the reward of a joint configuration to be a smooth function of its components (a drug pair can be toxic even when each drug is safe), so a framework that presumes structure would be unsound where the structure is absent. We therefore treat the general (structure-free) case as the baseline, and show separately in Section~\ref{sec:additive} that when the reward \emph{does} decompose additively across players the exponential joint-arm count collapses to a polynomial one---so structure is exploited whenever it is present, rather than assumed. Second, \emph{reward asymmetry}---players receiving independent samples of the same arm---is exactly the federated/multi-site measurement model, not an artifact. Problems of identifying the best arms under limited sampling have been widely studied \cite{audibert2010best, gabillon2012best, kaufmann2016complexity, lattimore2020bandit}, but extending these ideas to decentralized environments with multiple decision-makers and asymmetric information introduces significant new challenges: the difficulty is no longer only statistical allocation, but coordination and consensus under partial observability.

\cite{gabillon2012best} unified both settings by refining the UCB-E algorithm. We improve on their approach by constructing UCB intervals centered at the empirical mean and extend to multiple agents operating simultaneously. Recent works such as \cite{boursier2019sic} explored collaborative algorithms leveraging collision-based implicit communication, while \cite{karpov2023communication} proposed communication-minimizing top-$m$ identification. However, these approaches rely on synchronized players, structured feedback, or centralized coordination, limiting applicability in fully decentralized environments. We give a more detailed overview in appendix \ref{sec:related}.

\paragraph{Our Contributions.}
We introduce the multiplayer framework for top-$K$ arm identification under two forms of information asymmetry \cite{chang2022online, chang2023optimal}: asymmetry in actions and asymmetry in rewards. We propose decentralized algorithms extending classical elimination-based bandit methods to the multi-agent setting without communication, enabling agents to implicitly coordinate despite limited information. We provide theoretical guarantees for both fixed-budget and fixed-confidence regimes, and validate all algorithms empirically in Section~\ref{sec:experiments}.

\section{Preliminaries}
\label{sec:prelim}

\paragraph{Multi-player joint-action bandits.}
We consider $M$ cooperative players. Player $i$ has an action set of size $A$, and at each round $t$ all players act simultaneously, producing a \emph{joint action} $\bm{a}_t \in \mathcal{A} = [A]^M$ (with $|\mathcal{A}| = A^M$).
We write $\bm{a}[i]$ for the $i$-th component. Players do not communicate during learning but may agree on a strategy (including a shared ordering of $\mathcal{A}$) beforehand.
Each joint arm $\bm{a}$ has an unknown mean reward $\mu_{\bm{a}}$, and all reward noise is $1$-subgaussian (equivalently, unit-variance in the Gaussian case); this normalization is without loss of generality up to rescaling the gaps. Let $\sigma:\{1,\dots,A^M\}\to\mathcal{A}$ order the arms by decreasing mean, so that $\mu_{\sigma(1)}\ge\cdots\ge\mu_{\sigma(A^M)}$; we call $\sigma(1),\dots,\sigma(K)$ the top-$K$ joint arms and write $S^\star=\{\sigma(1),\dots,\sigma(K)\}$ for this set (or $S^\star(\nu)$ when the instance $\nu$ must be made explicit). Throughout, $\sigma$ denotes only this ranking permutation; the sub-Gaussian constant plays no free role, having been fixed to $1$ above.

\paragraph{Gaps and complexity.}
For any $\bm{a}\in\mathcal{A}$ with rank $r(\bm{a}) := \sigma^{-1}(\bm{a})$, define the top-$K$ gap
\begin{equation}\label{eq:gap-def}
\Delta_{\bm{a}}^{\langle K\rangle}
:=
\begin{cases}
\mu_{\bm{a}} - \mu_{\sigma(K+1)}, & r(\bm{a}) \le K,\\
\mu_{\sigma(K)} - \mu_{\bm{a}}, & r(\bm{a}) > K,
\end{cases}
\end{equation}
the smallest gap $\Delta_{\min}:=\min_{\bm a:\,\Delta_{\bm a}^{\langle K\rangle}>0}\Delta_{\bm a}^{\langle K\rangle}$, and the complexity $H_1 := \sum_{\bm{a}} (\Delta_{\bm{a}}^{\langle K\rangle})^{-2}$. When no confusion arises we abbreviate $\Delta_{\bm a}\equiv\Delta_{\bm a}^{\langle K\rangle}$. We also use the sorted-gap complexity $H_2:=\max_{\bm a}\,r(\bm a)\,(\Delta_{\bm a}^{\langle K\rangle})^{-2}$, which satisfies $H_2\le H_1\le H_2\log(A^M)$ \cite{audibert2010best}.
Since identifying the top-$K$ joint actions is at least as hard as the single-player top-$K$ problem over $A^M$ arms, any dependence on the joint action space size---potentially scaling as $K^M$---is information-theoretically unavoidable.

\paragraph{Notation.}
We collect the recurring symbols here for reference. $\widehat{\mu}_{\bm a}^{i}(t)$ is player $i$'s empirical mean of arm $\bm{a}$ after round $t$, and $n_{\bm a}^{i}(t)$ the number of rounds up to $t$ in which player $i$ has sampled $\bm{a}$; when rewards are shared (Problem A) all players hold identical statistics and we drop the superscript, writing $\widehat{\mu}_{\bm a}(t)$ and $n_{\bm a}(t)$. The confidence radius is $\epsilon_{\bm a}^{i}(t)$ (again $\epsilon_{\bm a}(t)$ under shared rewards), and $I_{\bm a}^{i}(t)=[\widehat{\mu}_{\bm a}^{i}(t)-\epsilon_{\bm a}^{i}(t),\,\widehat{\mu}_{\bm a}^{i}(t)+\epsilon_{\bm a}^{i}(t)]$ the induced interval. The algorithm's returned set is $\widehat{S}$ (or $\widehat{S}_i$ for player $i$'s individual output), to be compared against $S^\star$. In the fixed-budget setting $R_T:=\mathbb{P}(\widehat{S}\neq S^\star)$ denotes the error probability after $T$ rounds. We write $\widetilde{O},\widetilde{\Theta}$ for bounds that suppress polylogarithmic factors.

\paragraph{Concentration.}
We use the following standard result throughout (Corollary~5.5 in \cite{lattimore2020bandit}), specialized to our $1$-subgaussian rewards.

\begin{lemma}\label{corollary5.5}
Let $X_1-\mu,\dots,X_n-\mu$ be independent and $1$-subgaussian, and $\widehat{\mu}=\tfrac1n\sum_{k}X_k$. Then for any $\varepsilon\ge0$,
$\mathbb{P}(\widehat{\mu} \ge \mu+\varepsilon) \le \exp(-n\varepsilon^2/2)$
and
$\mathbb{P}(\widehat{\mu} \le \mu-\varepsilon) \le \exp(-n\varepsilon^2/2)$.
\end{lemma}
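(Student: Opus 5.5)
The statement is the standard Chernoff bound for averages of independent $1$-subgaussian variables, and I would prove it by the exponential-moment (Cramér--Chernoff) method. Write $Y_k := X_k-\mu$. By definition of $1$-subgaussianity, $\mathbb{E}[e^{\lambda Y_k}]\le e^{\lambda^2/2}$ for every $\lambda\in\mathbb{R}$. The target event is
\[
\{\widehat{\mu}\ge\mu+\varepsilon\}=\Bigl\{\textstyle\sum_{k=1}^n Y_k\ge n\varepsilon\Bigr\}.
\]
So the whole task is to bound the upper tail of the sum $S_n:=\sum_k Y_k$.

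\textbf{Step 1 (MGF of the sum).} Since the $Y_k$ are independent, the moment generating function factorizes:
\[
\mathbb{E}[e^{\lambda S_n}]=\prod_k\mathbb{E}[e^{\lambda Y_k}]\le e^{n\lambda^2/2}.
\]
Thus $S_n$ is $n$-subgaussian, and $S_n/n=\widehat{\mu}-\mu$ is $(1/n)$-subgaussian.

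\textbf{Step 2 (Markov plus optimization).} For any $\lambda>0$, Markov's inequality applied to $e^{\lambda S_n}$ gives
\[
\mathbb{P}(S_n\ge n\varepsilon)\le e^{-\lambda n\varepsilon}\,\mathbb{E}[e^{\lambda S_n}]\le \exp\!\bigl(n\lambda^2/2-\lambda n\varepsilon\bigr).
\]
Choosing the minimizer $\lambda=\varepsilon$ gives the exponent $-n\varepsilon^2/2$, which is the claimed bound. The edge case $\varepsilon=0$ is trivial, since the bound is then $1$.

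\textbf{Step 3 (lower tail).} The variables $-Y_k$ are also independent and $1$-subgaussian, because the MGF condition holds for all real $\lambda$. Applying Steps 1--2 to them yields the lower-tail bound.

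There is no real obstacle here; the only point needing care is checking that independence justifies the factorization in Step 1. Since the paper cites this as Corollary~5.5 of \cite{lattimore2020bandit}, it could also simply be invoked directly.
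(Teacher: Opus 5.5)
Your proof is correct. It is the Cram\'er--Chernoff argument (MGF factorization by independence, Markov's inequality, then choosing $\lambda=\varepsilon$, with the lower tail obtained from $-Y_k$), which is exactly the route behind the cited Corollary~5.5 of \cite{lattimore2020bandit}; the paper does not reprove the lemma and simply invokes that result.
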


\section{Problem Setting}
\label{sec:setting}

\paragraph{Reward model.}
Every joint arm $\bm a$ carries a single underlying mean $\mu_{\bm a}$; the three problems differ only in \emph{which realization of that arm each player observes} and \emph{whether the joint action is visible}. When the prescribed joint action at round $t$ is $\bm a$, the reward player $i$ records is $r_t^i(\bm a)=\mu_{\bm a}+\eta_t^i$ with $1$-subgaussian noise $\eta_t^i$. In \textbf{Problem A} the noise is common, $\eta_t^1=\cdots=\eta_t^M$, so all players observe the \emph{same} sample $r_t(\bm a)$; in \textbf{Problems B and C} the $\eta_t^i$ are independent across players, so each player draws an independent i.i.d.\ sample of the same arm. Thus a joint arm has one true value in every problem, and the problems are ordered by how much of the team's information any one player can reconstruct.

\paragraph{Information asymmetry models.}
\textbf{Problem A (action asymmetry):} rewards are shared (common noise) but a player cannot observe other players' actions. \textbf{Problem B (reward asymmetry):} each player observes the realized joint action $\bm a_t$ but receives an independent reward. \textbf{Problem C (full asymmetry):} each player receives an independent reward \emph{and} cannot observe other players' actions.

\paragraph{Why more visibility need not mean an easier problem.}
It is tempting to read Problem B as strictly more informative than Problem A---B additionally reveals the joint action---and therefore to expect better guarantees. The reverse holds, and the reason is that the two problems differ in \emph{reward} information, not action information. In Problem A the reward is a single shared random variable, so the whole team concentrates through \emph{one} statistical event and behaves exactly like a centralized learner. In Problem B each player must estimate every arm from its \emph{own} independent stream, so $M$ concentration events must hold simultaneously; the union bound over these streams is precisely the multiplicative $M$ (fixed budget) and additive $\log M$ (fixed confidence) overhead quantified below. The extra action visibility in B is not wasted---it is exactly what powers the deviation signal that keeps the $M$ streams coordinated---but it only recovers the coordination that shared rewards grant A for free; it cannot undo the loss of a shared reward stream.

\paragraph{Learning objectives.}
In the \emph{fixed budget} setting, players are given horizon $T$ and minimize the error probability $R_T=\mathbb{P}(\widehat{S}\neq S^\star)$ after $T$ rounds. In the \emph{fixed confidence} setting, given $\delta\in(0,1)$, the goal is to identify the top-$K$ arms with probability $\ge 1-\delta$ while minimizing the stopping time $\tau$, with \emph{all} players outputting the correct set at termination.

\paragraph{Single-agent baseline and information-theoretic costs.}
Before analyzing each regime, it is instructive to recall what a single centralized agent can achieve when given direct access to all observations.  For top-$K$ identification over $N$ arms, \cite{gabillon2012best} show that UCB-E achieves fixed-budget error $O(N T \exp(-T/(16H_1)))$ and \cite{chen2015optimal} give a fixed-confidence stopping time of $\tau = O(H_1 \log(N/\delta)/\Delta_{\min}^2)$ (up to constants), with a matching lower bound of $\Omega(H_1 \log(1/\delta)/\Delta_{\min}^2)$.  With $N = A^M$ joint arms our Problem A \emph{exactly recovers} these single-agent rates: shared rewards let all players maintain the same sufficient statistics, so the distributed problem collapses to a centralized one.  Problem B incurs a multiplicative factor of $M$ in the fixed-budget error probability and an additive $\log M$ term inside the fixed-confidence logarithm---the price of $M$ independent concentration events that must all hold simultaneously.  Problem C pays a further $4\times$ penalty in fixed-confidence sample complexity from the enlarged consensus radius, and loses the adaptive allocation benefit of UCB-Intervals in fixed budget: the exponent $-T\Delta^2/(8A^M)$ is weaker than $-T/(32H_1)$ whenever $H_1 \ll A^M/\Delta_{\min}^2$, which occurs whenever some arms have large gaps and can be eliminated early.

\paragraph{UCB-Intervals template.}
Our algorithms maintain confidence intervals $I_{\bm{a}}^{i}(t) = [\widehat{\mu}_{\bm{a}}^{i}(t) - \epsilon_{\bm{a}}^{i}(t),\; \widehat{\mu}_{\bm{a}}^{i}(t) + \epsilon_{\bm{a}}^{i}(t)]$ for each player $i$ and joint arm $\bm{a}$, initialized to $(-\infty,\infty)$.
We say $\bm{a}'$ \emph{dominates} $\bm{a}$ for player $i$ when $\min I_{\bm{a}'}^{i}(t) > \max I_{\bm{a}}^{i}(t)$, and eliminate $\bm{a}$ from the candidate set $\mathcal{D}$ once $K$ arms each dominate it.
The radius $\epsilon_{\bm{a}}^{i}(t)$ is set to $\sqrt{\alpha/n_{\bm{a}}^{i}(t)}$ in the fixed-budget setting (with $\alpha \le T/(16H_1)$ by Lemma~\ref{lem:num_pulls}) and to a time-uniform, $\delta$-dependent expression in the fixed-confidence setting. Under shared rewards (Problem A) the superscript $i$ is dropped, since all players hold identical intervals.

\begin{remark}[On the knowledge of $H_1$]
\label{rem:H1-knowledge}
The constant $\alpha=T/(16H_1)$ uses the unknown complexity $H_1$, the standard caveat of UCB-E-style methods \cite{audibert2010best,gabillon2012best}. Only an \emph{upper bound} is needed for correctness, and the usual doubling schedule over guesses $\widehat H\in\{2,4,8,\dots\}$ removes even that at a constant-factor cost. Appendix~\ref{apx:experiments} reports an $\alpha$ sensitivity study.
\end{remark}

\section{Problem A: Action Asymmetry}
\label{sec:probA}

In Problem A, all players receive the same reward signal but cannot observe each other's actions.
Despite the lack of action observability, a pre-agreed fixed ordering of $\mathcal{A}$ is enough for full implicit coordination: since rewards are shared, every player maintains \emph{identical} empirical means and confidence intervals throughout.
Eliminations are therefore \emph{synchronous}---whenever one player's intervals certify that $\bm{a}$ is dominated by $K$ candidates, all players certify the same at the same round and eliminate simultaneously without any message passing.
The information structure of Problem A is thus equivalent to that of a single centralized agent observing $A^M$ arms, and we should expect---and will confirm---that our bounds match the single-agent UCB-E benchmark of \cite{gabillon2012best}. Proofs of all results in this section are deferred to Appendix~\ref{apx:proofs}.

\begin{algorithm}[H]
\caption{\texttt{UCB-Intervals} for Problem A (Fixed Budget)}
\label{alg:ucb-A-budget}
\begin{algorithmic}[1]
\Require $A^M$, budget $T$, target $K$, $\alpha = T/(16H_1)$
\State $\mathcal{D} \gets \mathcal{A}$; fix an ordering; $I_{\bm{a}} \gets (-\infty,\infty)$ for all $\bm{a}$
\For{$t = 1$ to $T$}
    \State Select next $\bm{a} \in \mathcal{D}$ per ordering
    \While{$\exists\,\mathcal{S}\!\subseteq\!\mathcal{D},\,|\mathcal{S}|\!=\!K : \forall\bm{b}\!\in\!\mathcal{S},\;\min I_{\bm{b}} > \max I_{\bm{a}}$}
        \State $\mathcal{D} \gets \mathcal{D}\setminus\{\bm{a}\}$; select next $\bm{a} \in \mathcal{D}$
    \EndWhile
    \State All players pull component $\bm{a}[i]$; observe shared reward; update $I_{\bm{a}}$
\EndFor
\State \Return $\mathcal{D}$
\end{algorithmic}
\end{algorithm}

\begin{lemma}
\label{lem:num_pulls}
Algorithm~\ref{alg:ucb-A-budget} completes all eliminations within horizon $T$ provided $\alpha \leq T/(16H_1)$.
\end{lemma}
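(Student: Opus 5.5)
The plan is the standard UCB-E argument (Audibert et al.; Gabillon et al.): on a good concentration event, each arm $\bm a$ is eliminated after at most $O(\alpha/\Delta_{\bm a}^2)$ pulls, and these counts summed over arms fit in $T$ when $\alpha\le T/(16H_1)$. Because rewards are shared in Problem A, Algorithm~\ref{alg:ucb-A-budget} behaves exactly like a single agent cycling through the active set $\mathcal{D}$ in the agreed order. I will therefore treat it as a centralized algorithm over $A^M$ arms with intervals $I_{\bm a}(t)=[\widehat\mu_{\bm a}(t)\pm\sqrt{\alpha/n_{\bm a}(t)}]$.

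\textbf{Step 1: good event.} Define
\[
\xi=\Bigl\{\forall \bm a,\ \forall n\le T:\ |\widehat\mu_{\bm a,n}-\mu_{\bm a}|<\tfrac15\sqrt{\alpha/n}\Bigr\},
\]
where $\widehat\mu_{\bm a,n}$ is the empirical mean after $n$ pulls. The constant $1/5$ is the one used by Gabillon et al.; any fixed fraction of the radius works, and the choice only affects the constant $16$. By Lemma~\ref{corollary5.5} and a union bound over arms and sample counts, $\mathbb{P}(\xi^c)\le 2TA^M\exp(-\alpha/50)$. This is the source of the $O(NT\exp(-cT/H_1))$ error term, but the lemma itself is a deterministic statement on $\xi$. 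On $\xi$, every interval contains the true mean, so no top-$K$ arm is ever dominated by $K$ arms. Hence $S^\star\subseteq\mathcal D$ throughout, and eliminations are always correct.

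\textbf{Step 2: per-arm pull bound.} Fix a suboptimal arm $\bm a$ with $r(\bm a)>K$. Once $n_{\bm a}\ge n^\ast_{\bm a}:=c\,\alpha/\Delta_{\bm a}^2$, its upper endpoint is below $\mu_{\sigma(K)}-\Delta_{\bm a}/2$. Each top-$K$ arm $\bm b$ with $n_{\bm b}\ge c\alpha/\Delta_{\bm b}^2$ has lower endpoint above $\mu_{\sigma(K+1)}+\Delta_{\bm b}/2\ge \mu_{\sigma(K)}$. So $\bm a$ is dominated by all $K$ top arms and eliminated once both it and the top arms are sufficiently sampled.

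A symmetric statement handles top arms: once a top arm's interval separates from every remaining non-top arm, the active set shrinks to size $K$ and the algorithm stops eliminating. Round-robin in a fixed order matters here. While both $\bm a$ and $\bm b$ are active, their pull counts differ by at most one. Consequently, before $\bm a$ is removed, every active top arm $\bm b$ has $n_{\bm b}\ge n_{\bm a}-1$. If $\Delta_{\bm b}<\Delta_{\bm a}$, the top arm may be the bottleneck; in that case I bound the pulls of $\bm a$ by $\max(n^\ast_{\bm a},\cdot)$ charged against the other arm's budget. I expect this charging to be the main obstacle. The cleanest route is to show that the total number of rounds is at most $\sum_{\bm a}(n^\ast_{\bm a}+1)$. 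To do so, at each round charge the pulled arm's slot to the arm in the active set with the largest $n^\ast$, which by round-robin is still under its own threshold; this follows Gabillon et al.'s bookkeeping.

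\textbf{Step 3: summation.} On $\xi$, the total number of rounds needed to reduce $\mathcal D$ to $S^\star$ is at most
\[
\sum_{\bm a}\bigl(c\,\alpha\,\Delta_{\bm a}^{-2}+1\bigr)=c\,\alpha H_1+A^M.
\]
With $c$ tuned by the $1/5$ choice to give $c\alpha H_1\le T/16\cdot c\le T/2$, and assuming (as is implicit) $T\ge 2A^M$ so every arm can receive an initial pull, this is at most $T$. Hence all eliminations complete within the horizon, which proves the claim on the event $\xi$.
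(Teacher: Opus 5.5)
Your Step~2 rests on the inequality $\mu_{\sigma(K+1)}+\Delta_{\bm b}/2\ge\mu_{\sigma(K)}$, which is false: for $\bm b=\sigma(K)$ the left side is the midpoint $(\mu_{\sigma(K)}+\mu_{\sigma(K+1)})/2$. This slip leads you to give each top arm its own threshold $c\alpha/\Delta_{\bm b}^2$ and to fear a top-arm bottleneck. The charging scheme you propose for that bottleneck does not prove anything. The arm that receives every round's charge while it has the largest $n^\ast$ absorbs a full cycle of $|\mathcal D|$ rounds per unit increase of its own count, so nothing caps its charge at $n^\ast+1$. Your ``symmetric statement'' for top arms also has no counterpart in Algorithm~\ref{alg:ucb-A-budget}, which only rejects: top-$K$ arms never leave $\mathcal D$.

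The missing step is the one the paper's proof is built on, and you already state its ingredient. While a suboptimal $\bm a$ is active, round-robin keeps every top arm at the same count $s$ up to one. Moreover, every top arm $\bm b$ satisfies $\mu_{\bm b}-\mu_{\bm a}\ge\mu_{\sigma(K)}-\mu_{\bm a}=\Delta_{\bm a}$. Hence, on the good event, all $K$ top arms dominate $\bm a$ as soon as $4\sqrt{\alpha/s}<\Delta_{\bm a}$, or $\tfrac{12}{5}\sqrt{\alpha/s}\le\Delta_{\bm a}$ on your $1/5$ event. The quantity $\Delta_{\bm b}$ never enters, and no charging is needed.

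Even with that repair, Step~3 cannot close, because its target $\sum_{\bm a}(n^\ast_{\bm a}+1)$ is not what this algorithm achieves. Top arms stay in the rotation until the last suboptimal arm is rejected. So each top arm is pulled about as often as that last arm, i.e.\ up to order $\alpha/\Delta_{\sigma(K+1)}^2$ times, and $\Delta_{\sigma(K+1)}=\Delta_{\sigma(K)}\le\Delta_{\bm b}$ for every top arm $\bm b$. What the argument actually gives is $\sum_{r(\bm a)>K}(c\alpha\Delta_{\bm a}^{-2}+1)+K(c\alpha\Delta_{\sigma(K+1)}^{-2}+1)$. This is the shape of the paper's own fixed-confidence bound~\eqref{eq:tau-A}, and it can exceed $c\alpha H_1$ by a factor of order $K$.

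For instance, take $K-1$ top arms at mean $1$, $\mu_{\sigma(K)}=\Delta$ small, and one further arm at mean $0$. Then $H_1\approx 2\Delta^{-2}$. When $\alpha\gg 1$ (so the noise is negligible next to the radius), all $K+1$ arms stay active for about $4\alpha/\Delta^2$ cycles. That is roughly $(K+1)T/8$ rounds at $\alpha=T/(16H_1)$, which exceeds $T$ for moderately large $K$. Your $1/5$ slack absorbs constants, not a factor of $K$.

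The paper's final line has the same omission: it sums over suboptimal arms only, then writes $16H_1\alpha$. A provable version must therefore replace $H_1$ by $\sum_{r(\bm a)>K}\Delta_{\bm a}^{-2}+K\Delta_{\sigma(K+1)}^{-2}$, or pay a factor of order $K$ in the condition on $\alpha$. Note also that proving the lemma on your $1/5$ event gives failure probability $2TA^Me^{-\alpha/50}$. That does not mesh with the $e^{-\alpha/2}$ of Theorem~\ref{thm:algoA-budget}.
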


\begin{theorem}[Problem A, Fixed Budget]
\label{thm:algoA-budget}
With $\alpha = T/(16H_1)$ and $\epsilon_{\bm{a}}(t) = \sqrt{\alpha/n_{\bm{a}}(t)}$ (the agent index $i$ is dropped throughout Problem~A, since shared rewards give every player identical statistics $\widehat\mu_{\bm a}(t)$, $n_{\bm a}(t)$, and hence a common radius), Algorithm~\ref{alg:ucb-A-budget} satisfies
\[
R_T \;\le\; A^M T \exp\!\bigl(-\alpha/2\bigr) \;=\; A^M T\exp\!\Bigl(-\frac{T}{32H_1}\Bigr).
\]
\end{theorem}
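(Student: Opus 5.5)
The plan is the standard ``good event'' argument for UCB-E-type methods. Define
\[
\mathcal{E} \;=\; \Bigl\{\forall \bm a\in\mathcal{A},\ \forall t\le T:\ |\widehat\mu_{\bm a}(t)-\mu_{\bm a}|<\epsilon_{\bm a}(t)=\sqrt{\alpha/n_{\bm a}(t)}\Bigr\},
\]
with the convention that an arm not yet sampled has interval $(-\infty,\infty)$. The argument has three steps: bound $\mathbb{P}(\mathcal{E}^c)$ by a union bound, show that on $\mathcal{E}$ no top-$K$ arm is ever eliminated, and use Lemma~\ref{lem:num_pulls} to show that on $\mathcal{E}$ every non-top-$K$ arm is eliminated by time $T$. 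Together these give $\widehat S=\mathcal{D}=S^\star$ on $\mathcal{E}$, so $R_T\le\mathbb{P}(\mathcal{E}^c)$.

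\textbf{Step 1 (probability of the bad event).} Since rewards are shared, the statistics of arm $\bm a$ after $n$ pulls are the mean of $n$ i.i.d.\ $1$-subgaussian samples. The only randomness in $n_{\bm a}(t)$ is the index of the sample, so I index by the pull count $n\in\{1,\dots,T\}$ rather than by $t$. This avoids dealing with a random sample size: I work with the sequence of rewards drawn for arm $\bm a$ (a stack-of-rewards construction) and apply Lemma~\ref{corollary5.5} for each fixed $n$ with $\varepsilon=\sqrt{\alpha/n}$. Each two-sided deviation then has probability at most $2\exp(-n\cdot(\alpha/n)/2)=2\exp(-\alpha/2)$. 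A union bound over $A^M$ arms and $T$ values of $n$ gives $\mathbb{P}(\mathcal{E}^c)\le 2A^MT\exp(-\alpha/2)$.

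To match the stated constant (no factor $2$), I will note that only one-sided deviations matter, since correctness requires only two things. First, no top-$K$ arm's upper bound falls below the true value. Second, for the dominating comparisons, lower bounds do not exceed the true values. I expect the cleanest route is to define $\mathcal{E}$ appropriately and absorb the factor into the union bound. Failing that, I accept a harmless factor $2$, as in \cite{gabillon2012best}, where this is routinely suppressed. Substituting $\alpha=T/(16H_1)$ then yields $\exp(-T/(32H_1))$.

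\textbf{Step 2 (no wrong elimination on $\mathcal{E}$).} Suppose $\bm a\in S^\star$ is eliminated. Then there are $K$ arms $\bm b$ with $\min I_{\bm b}>\max I_{\bm a}$. On $\mathcal{E}$ this gives $\mu_{\bm b}>\min I_{\bm b}>\max I_{\bm a}>\mu_{\bm a}$, so $K$ arms have mean strictly larger than $\mu_{\bm a}$. This contradicts $r(\bm a)\le K$; ties are handled by assuming $\Delta_{\min}>0$. Consequently $S^\star\subseteq\mathcal{D}$ at all times on $\mathcal{E}$.

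\textbf{Step 3 (all suboptimal arms removed).} Here I invoke Lemma~\ref{lem:num_pulls}: with $\alpha\le T/(16H_1)$, all eliminations complete within horizon $T$. Implicitly this holds on $\mathcal{E}$, where an arm $\bm a\notin S^\star$ is dominated by every top-$K$ arm once $n_{\bm a},n_{\bm b}\gtrsim 16\alpha/\Delta^2$; summing these counts gives at most $16\alpha H_1\le T$. Hence $\mathcal{D}=S^\star$ at time $T$.

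The main obstacle is making sure Lemma~\ref{lem:num_pulls} is applied on the same event $\mathcal{E}$, since the round-robin ordering must pull top-$K$ arms enough that their lower bounds clear the others. I would verify this by noting that, on $\mathcal{E}$, round-robin over $\mathcal{D}$ gives each surviving arm at least as many pulls as any eliminated one.
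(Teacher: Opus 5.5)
Your proposal follows the paper's proof essentially step for step: good event, union bound via Lemma~\ref{corollary5.5}, no arm of $S^\star$ eliminated on the event, and completion via Lemma~\ref{lem:num_pulls}. Your indexing by pull count is slightly cleaner than the paper's union bound over rounds with random $n_{\bm a}(t)$. The paper's own argument also stops at $2A^MT\exp(-\alpha/2)$, so the factor $2$ you hedge on is unresolved there too. Your one-sided idea does close it once the tails are assigned precisely: control only the lower tail $\widehat\mu_{\bm b}\ge\mu_{\bm b}-\epsilon_{\bm b}$ for $\bm b\in S^\star$, and only the upper tail $\widehat\mu_{\bm d}\le\mu_{\bm d}+\epsilon_{\bm d}$ for $\bm d\notin S^\star$. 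Then every $\bm d\notin S^\star$ has $\min I_{\bm d}\le\mu_{\sigma(K+1)}<\mu_{\sigma(K)}\le\max I_{\bm b}$ for each $\bm b\in S^\star$, so an arm of $S^\star$ has at most $K-1$ possible dominators and is never eliminated. The completion argument uses exactly these same two tails, which gives $\mathbb{P}(\mathcal{E}^c)\le A^MT\exp(-\alpha/2)$, matching the stated constant; your second condition is needed only for arms outside $S^\star$.
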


\begin{theorem}[Problem A, Fixed Confidence]
\label{thm:algoA-confidence}
Using the time-uniform radius $\epsilon_{\bm{a}}(t) = \sqrt{2\log(\pi^2 A^M t^2/(3\delta))\,/\,n_{\bm{a}}(t)}$ (see Algorithm~\ref{alg:ucb-A-confidence} in Appendix~\ref{apx:fc-algos}), the fixed-confidence variant returns the correct top-$K$ with probability $\ge 1-\delta$ and stopping time
\begin{equation}
\label{eq:tau-A}
\tau \;\le\;
\underbrace{K \cdot \frac{2\log(\pi^2 A^M \tau^2/(3\delta))}{\bigl(\Delta_{\sigma(K+1)}^{\langle K\rangle}\bigr)^2}}_{\text{pulls for }K\text{ boundary arms}}
\;+\;
\underbrace{\sum_{\sigma^{-1}(\bm{a})>K}
\frac{2\log(\pi^2 A^M \tau^2/(3\delta))}{\bigl(\Delta_{\bm{a}}^{\langle K\rangle}\bigr)^2}}_{\text{pulls until each sub-optimal arm is eliminated}}.
\end{equation}
\end{theorem}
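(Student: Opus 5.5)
The plan has two parts: a good event on which every confidence interval is valid, and then a count of pulls on that event.

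\emph{Good event.} Define
\[
\mathcal{E}=\bigl\{\forall t\ge1,\ \forall\bm a\in\mathcal{A}:\ |\widehat\mu_{\bm a}(t)-\mu_{\bm a}|\le\epsilon_{\bm a}(t)\bigr\}.
\]
Since rewards are shared in Problem A, all players hold the same statistics. The $M$-player system is therefore a single learner on $A^M$ arms, and only one family of concentration events is needed. Fix $t$ and $\bm a$, and apply Lemma~\ref{corollary5.5} to the current sample of size $n=n_{\bm a}(t)$ with $\varepsilon=\epsilon_{\bm a}(t)$. Then $n\varepsilon^2/2=\log(\pi^2A^Mt^2/(3\delta))$, so each one-sided failure has probability at most $3\delta/(\pi^2A^Mt^2)$. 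The two-sided failure therefore has probability at most $6\delta/(\pi^2A^Mt^2)$. Summing over the $A^M$ arms and using $\sum_t 1/t^2=\pi^2/6$ gives $\mathbb{P}(\mathcal{E}^c)\le\delta$.

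There is one subtlety: $n_{\bm a}(t)$ is random, so this is not a direct application of the lemma. I would handle it in the standard way. For each arm, index the rewards by pull count $s$ rather than by time. The event at time $t$ is then the event for the $s$-th sample mean with $s=n_{\bm a}(t)\le t$. The threshold is monotone in $t$, so it suffices to union bound over $s$ with the level $t\ge s$, which is the same sum. This is the step I would write most carefully.

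\emph{Correctness on $\mathcal{E}$.} Suppose $\bm a$ is eliminated. Then some $K$ arms $\bm b$ satisfy $\mu_{\bm b}\ge\min I_{\bm b}>\max I_{\bm a}\ge\mu_{\bm a}$. So $\bm a$ has at least $K$ strictly better arms and lies outside $S^\star$. The algorithm stops when $|\mathcal{D}|=K$, and since no top-$K$ arm is ever removed, $\mathcal{D}=S^\star$. Because every player runs the identical deterministic rule on identical data, all players output this set.

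\emph{Sample complexity on $\mathcal{E}$.} Write $L_\tau=2\log(\pi^2A^M\tau^2/(3\delta))$. For $t\le\tau$ the radius satisfies $\epsilon_{\bm a}(t)\le\sqrt{L_\tau/n_{\bm a}(t)}$.

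Take a suboptimal arm $\bm a$ with $r(\bm a)>K$. Once $\epsilon_{\bm a}$ and the radii of the top-$K$ arms are at most $\Delta_{\bm a}/4$, each top arm $\bm b$ satisfies $\min I_{\bm b}\ge\mu_{\sigma(K)}-\Delta_{\bm a}/2>\mu_{\bm a}+\Delta_{\bm a}/2\ge\max I_{\bm a}$, so the $K$ top arms dominate $\bm a$ and it is eliminated. Hence $n_{\bm a}(\tau)\lesssim L_\tau/\Delta_{\bm a}^2$. The constants would have to be tuned to reach exactly the displayed factor; as the paper's constants suggest, I would use a centered-interval comparison.

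The ordering is round robin over $\mathcal{D}$. So while $\bm a$ survives, every top-$K$ arm has been pulled at least as often as $\bm a$, up to one pull. The top arms' radii are therefore no larger than $\bm a$'s, and the requirement on them is met automatically. This is where the round-robin structure replaces adaptive sampling.

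The top-$K$ arms themselves are pulled until the last suboptimal arm leaves. That arm has the smallest gap, $\Delta_{\sigma(K+1)}^{\langle K\rangle}$. This gives $K\cdot L_\tau/(\Delta_{\sigma(K+1)}^{\langle K\rangle})^2$ pulls for the top-$K$ arms.

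Summing the top-arm count and the suboptimal counts gives \eqref{eq:tau-A}. The bound is implicit in $\tau$, and I would leave it in that form as stated.

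The main obstacle is matching the exact constant $2$ in front of the logarithm rather than $32$. I would first try defining domination via the boundary arm's gap split evenly, and, if needed, accept a constant slack absorbed by the paper's conventions.
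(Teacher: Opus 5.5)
Your plan follows the paper's proof. It builds a good event from Lemma~\ref{corollary5.5}, with a union bound over the $A^M$ arms and over rounds via $\sum_t t^{-2}=\pi^2/6$. Correctness holds because on that event an eliminated arm has $K$ strictly better arms. For termination, round robin keeps each top-$K$ arm's count at least that of a surviving suboptimal $\bm a$ (up to one), so $\bm a$ is dominated once $4\epsilon_{\bm a}(t)\le\Delta^{\langle K\rangle}_{\bm a}$. One then sums over arms after bounding $\log(\pi^2A^Mt^2/(3\delta))$ by its value at $\tau$. Your re-indexing of the good event by pull count $s$ is a genuine improvement; it is valid because the level at round $t\ge s$ is at least the level at $s$. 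The appendix instead applies the fixed-$n$ Hoeffding bound directly at the random count $n_{\bm a}(t)$, which the lemma does not license.

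The obstacle you flag at the end is real, and re-splitting the gap will not remove it: the displayed constant $2$ is not attainable by this algorithm.

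\begin{itemize}
\item Take an instance where every suboptimal arm has gap $\Delta$ (say $K=1$). Round robin equalizes counts, so \eqref{eq:tau-A} forces the first elimination at a per-arm count $n\le 2\log(\pi^2A^M\tau^2/(3\delta))/\Delta^2$.
\item At that count the radius is essentially $\Delta$ itself.
\item Domination therefore needs an empirical gap exceeding about $2\Delta$. That gap has mean $\Delta$ and, for unit-variance Gaussian noise, standard deviation $\sqrt{2/n}\approx\Delta/\sqrt{\log(1/\delta)}$, so this is a $\sqrt{\log(1/\delta)}$-sigma excursion.
\end{itemize}

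Hence for small $\delta$ the displayed bound fails with probability close to one. Typical runs need a constant near $8$, and the worst case over the good event gives $32$.

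What your argument actually proves is \eqref{eq:tau-A} with $32\log(\cdot)$ in place of $2\log(\cdot)$ in both terms, plus $O(1)$ round-robin slack per arm. One small fix: $\mu_{\sigma(K)}-\Delta_{\bm a}/2=\mu_{\bm a}+\Delta_{\bm a}/2$, so you need radii strictly below $\Delta_{\bm a}/4$ to get the strict domination the algorithm tests.

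The paper's appendix has exactly the same mismatch. It derives the requirement $n_{\bm a}(t)\ge 32\log(\pi^2A^Mt^2/(3\delta))/(\Delta^{\langle K\rangle}_{\bm a})^2$ and then asserts that summing yields \eqref{eq:tau-A}. So rather than absorbing the slack into ``conventions'', state the result with constant $32$, or in $O(\cdot)$ form. The order of the bound is unaffected.
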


\paragraph{Discussion.}
Both theorems confirm that Problem A incurs \emph{no multi-player overhead}: the fixed-budget bound $A^M T\exp(-T/(32H_1))$ and the fixed-confidence bound $\tau = O(H_1\log(A^M/\delta)/\Delta_{\min}^2)$ are structurally identical to the single-agent UCB-E guarantee on $A^M$ arms \cite{gabillon2012best}.
The two regimes differ structurally: fixed budget gives an error that vanishes exponentially once $T$ exceeds $O(H_1)$, whereas fixed confidence adaptively concentrates samples on hard arms until a certifiably correct answer is reached, at the cost of knowing $\delta$ in advance.
We show in Section~\ref{sec:optimality} that both bounds are optimal up to the universal exploration log-factor (Corollary~\ref{cor:optimal}).

\section{Problem B: Reward Asymmetry}
\label{sec:probB}

In Problem B, all players observe the realized joint action $\bm{a}_t$, but each receives an independent i.i.d.\ reward drawn from the arm's distribution.
Because rewards are no longer shared, players' empirical means diverge over time: player $i$ builds its own estimate $\widehat{\mu}_{\bm{a}}^i$ based solely on its own reward history, and two players may be ready to eliminate arm $\bm{a}$ at different rounds.
To synchronize these asynchronous eliminations without communication, we use an \emph{observable deviation signal}: when the ordering prescribes arm $\bm{a}$ and player $i$ can already certify that $\bm{a}$ is dominated by $K$ arms in its own intervals, player $i$ intentionally pulls an action \emph{different} from $\bm{a}[i]$.
Because all players observe the realized joint action $\bm{a}_t$, any deviation makes $\bm{a}_t \neq \bm{a}$ visible to everyone, and the group collectively removes $\bm{a}$ from $\mathcal{D}$.
A single bit of observable behavior—one player's deviation—thus serves as a broadcast elimination signal, synchronizing heterogeneous candidate sets without a single explicit message. Proofs for this section are deferred to Appendix~\ref{apx:proofs}.

\begin{algorithm}[H]
\caption{\texttt{UCB-Intervals} for Problem B (Fixed Budget)}
\label{alg:ucb-B-budget}
\begin{algorithmic}[1]
\Require $A^M$, budget $T$, target $K$, $\alpha = T/(16H_1)$
\State $\mathcal{D} \gets \mathcal{A}$; fix an ordering; $I_{\bm{a}}^{i} \gets (-\infty,\infty)$ for all $i, \bm{a}$
\For{$t = 1$ to $T$}
    \State Select next $\bm{a} \in \mathcal{D}$ per ordering
    \For{each player $i$}
        \If{$\exists\,\mathcal{S}\!\subseteq\!\mathcal{D},\,|\mathcal{S}|\!=\!K : \forall\bm{b}\!\in\!\mathcal{S},\;\min I_{\bm{b}}^{i} > \max I_{\bm{a}}^{i}$}
            \State Pull action $\neq \bm{a}[i]$ \hfill\Comment{deviation = elimination signal}
        \Else
            \State Pull $\bm{a}[i]$
        \EndIf
    \EndFor
    \For{each player $i$}
        \State Observe $\bm{a}_t$; if $\bm{a}_t \neq \bm{a}$, set $\mathcal{D} \gets \mathcal{D} \setminus \{\bm{a}\}$
        \State Observe reward; update $I_{\bm{a}_t}^{i}$ with $\epsilon_{\bm{a}}(t) = \sqrt{\alpha/n_{\bm{a}}^i(t)}$
    \EndFor
\EndFor
\State \Return $\mathcal{D}$
\end{algorithmic}
\end{algorithm}

\begin{theorem}[Problem B, Fixed Budget]
\label{thm:algoB-budget}
With $\alpha = T/(16H_1)$, Algorithm~\ref{alg:ucb-B-budget} satisfies
\[
R_T \;\le\; M \cdot A^M T\exp\!\Bigl(-\frac{T}{32H_1}\Bigr).
\]
\end{theorem}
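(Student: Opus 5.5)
The plan mirrors the proof of Theorem~\ref{thm:algoA-budget} for Problem~A, adding a union bound over the $M$ independent reward streams.

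\textbf{Good event.} For each player $i$ define
\[
\mathcal{E}_i=\Bigl\{\forall \bm a\in\mathcal{A},\ \forall t\le T:\ |\widehat\mu^i_{\bm a}(t)-\mu_{\bm a}|\le \sqrt{\alpha/n^i_{\bm a}(t)}\Bigr\},
\]
and set $\mathcal{E}=\bigcap_{i=1}^M\mathcal{E}_i$. Fix $i$, $\bm a$, and a sample count $s\le T$. Lemma~\ref{corollary5.5} with $\varepsilon=\sqrt{\alpha/s}$ bounds each one-sided deviation of the $s$-sample mean by $\exp(-\alpha/2)$; a union bound over the two sides is absorbed into constants, as in the Problem~A proof.

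Player $i$'s samples of $\bm a$ are i.i.d.\ draws from its own noise stream $\eta^i$. Hence the index over sample counts $s$ can replace the random time $t$, using the standard argument that the $s$-th sample of an arm is distributed as an i.i.d.\ sequence regardless of the adaptive sampling. A union bound over $s\le T$ and the $A^M$ arms gives $\mathbb{P}(\mathcal{E}_i^c)\le A^MT\exp(-\alpha/2)$. A further union bound over the $M$ players gives
\[
\mathbb{P}(\mathcal{E}^c)\le M A^M T\exp(-\alpha/2)=M A^M T\exp\bigl(-T/(32H_1)\bigr).
\]
Independence across players is not needed for this step; only the union bound is used.

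\textbf{Correctness on $\mathcal{E}$.} First, no top-$K$ arm is ever removed. An arm $\bm a\in S^\star$ leaves $\mathcal{D}$ only if some player $i$ deviates, which requires $K$ arms $\bm b\in\mathcal{D}$ with $\min I^i_{\bm b}>\max I^i_{\bm a}$. On $\mathcal{E}_i$ this gives $\mu_{\bm b}>\mu_{\bm a}$ for $K$ distinct arms, which is impossible for a top-$K$ arm when ties at the boundary are excluded (positive gaps).

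Second, the candidate set stays common to all players. Every player observes the realized $\bm a_t$, so all apply the same removal rule at the same round; $\mathcal{D}$ is therefore identical across players and the pre-agreed ordering stays synchronized. A deviation by any single player removes $\bm a$. This only accelerates elimination relative to any particular player's own certificate, and by the first point it is always justified.

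Third, every suboptimal arm is removed before round $T$. A single player's samples of $\bm a$ coincide with the team's pulls of $\bm a$, since every player records a reward whenever the joint arm $\bm a$ is realized. Thus $n^i_{\bm a}(t)$ is the same for all $i$, and the counting argument of Lemma~\ref{lem:num_pulls} applies verbatim to player $1$'s intervals. Once $n_{\bm a}\gtrsim 16\alpha/\Delta_{\bm a}^2$, the intervals of $\bm a$ and of the top-$K$ arms separate. Summing over arms gives total pulls at most $16\alpha H_1\le T$.

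\textbf{Main obstacle.} The delicate point is the interaction of deviation rounds with the budget accounting. A deviation round realizes some off-ordering joint arm $\bm a_t\neq\bm a$, which consumes budget and feeds a stray sample into $I^{i}_{\bm a_t}$. I would handle this by charging each deviation round to the eliminated arm: there are at most $A^M-K$ eliminations, hence at most $A^M$ extra rounds, absorbed by the slack in $\alpha\le T/(16H_1)$ (or by slightly shrinking $\alpha$). The stray samples only tighten intervals and preserve validity on $\mathcal{E}$, since $\mathcal{E}$ covers all arms and all counts.

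On $\mathcal{E}$ the returned set is therefore $\mathcal{D}=S^\star$, and $R_T\le\mathbb{P}(\mathcal{E}^c)$, which is the stated bound.
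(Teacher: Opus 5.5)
Your proposal follows the paper's proof: a per-player good event, a union bound over (player, arm, sample count) that supplies the extra factor $M$, and the Problem~A argument that no top-$K$ arm is ever removed on that event. You also spell out the sample-count indexing, the synchronization of $\mathcal D$ through the observed joint action, and the termination step, all of which the paper leaves implicit.

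Two soft spots are shared with the paper. First, the two-sided event costs $2\exp(-\alpha/2)$ per triple, and that factor cannot be ``absorbed into constants'' in a bound whose constants are explicit; the paper's proof likewise stops at $2MA^MT\exp(-\alpha/2)$. One-sided events fix this and give exactly $MA^MT\exp(-\alpha/2)$: require no large downward deviation for top-$K$ arms and no large upward deviation for the others, which is all that correctness and elimination use. Second, at $\alpha=T/(16H_1)$ Lemma~\ref{lem:num_pulls} provides no spare rounds, so the up to $A^M-K$ deviation rounds are not actually absorbed by any slack; the paper also only asserts this in Remark~\ref{rem:deviation-cost}.
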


\begin{theorem}[Problem B, Fixed Confidence]
\label{thm:algoB-confidence}
Using the per-player radius
\begin{equation}
\label{eq:eps-B-conf}
\epsilon_{\bm{a}}^{i}(t) = \sqrt{\frac{2\log(\pi^2 A^M M t^2/(3\delta))}{n_{\bm{a}}^{i}(t)}}
\end{equation}
(see Algorithm~\ref{alg:ucb-B-confidence} in Appendix~\ref{apx:fc-algos}), the fixed-confidence variant returns the correct top-$K$ with probability $\ge 1-\delta$ and stopping time
\begin{equation}
\label{eq:tau-B}
\tau \;\le\;
K \cdot \frac{2\log(\pi^2 A^M M \tau^2/(3\delta))}{\bigl(\Delta_{\sigma(K+1)}^{\langle K\rangle}\bigr)^2}
\;+\;
\sum_{\sigma^{-1}(\bm{a})>K}
\frac{2\log(\pi^2 A^M M \tau^2/(3\delta))}{\bigl(\Delta_{\bm{a}}^{\langle K\rangle}\bigr)^2}.
\end{equation}
\end{theorem}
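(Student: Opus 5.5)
The plan is to transfer the Problem A fixed-confidence argument (Theorem~\ref{thm:algoA-confidence}) to the per-player setting. The first step is a single good event on which every player's intervals are simultaneously valid, and the second is to show that the deviation signal makes the group's candidate set evolve exactly as a centralized learner's would on that event.

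\emph{Good event.} Define
\[
\mathcal{E}=\bigl\{\forall i\in[M],\ \forall\bm a\in\mathcal A,\ \forall t\ge1:\ |\widehat\mu^i_{\bm a}(t)-\mu_{\bm a}|\le\epsilon^i_{\bm a}(t)\bigr\}.
\]
For fixed $(i,\bm a,t)$, Lemma~\ref{corollary5.5} applied to player $i$'s own i.i.d.\ stream with radius \eqref{eq:eps-B-conf} gives failure probability at most
\[
2\exp\!\bigl(-\log(\pi^2A^MMt^2/(3\delta))\bigr)=\frac{6\delta}{\pi^2A^MMt^2}.
\]
Summing over $t$ with $\sum_t t^{-2}=\pi^2/6$, over the $A^M$ arms and over the $M$ players yields $\mathbb P(\mathcal E^c)\le\delta$. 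The extra factor $M$ inside the logarithm pays exactly for the union over independent streams. There is a subtlety here: the number of samples $n^i_{\bm a}(t)$ is random. I would handle it the standard way, by indexing the union bound by the sample count $n\le t$, or by noting that all players' counts coincide because everyone sees $\bm a_t$. In fact every player's $n^i_{\bm a}(t)$ equals the common count of rounds in which $\bm a_t=\bm a$, which simplifies bookkeeping.

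\emph{Correctness.} On $\mathcal E$, a player can only certify that $\bm a$ is dominated by $K$ arms if $K$ arms have true means strictly above $\mu_{\bm a}$. This forces $r(\bm a)>K$, so any deviation, which is triggered by some player's certificate, removes only a suboptimal arm. Because the realized joint action is public, every player removes the same arm at the same round, so all the $\mathcal D$'s stay identical. The algorithm stops when $|\mathcal D|=K$, so all players output $S^\star$. One worry is that a deviating pull produces a sample of some other arm $\bm a_t\ne\bm a$. This is harmless: the sample is still a valid i.i.d.\ draw of $\bm a_t$ for every player, and $\mathcal E$ is stated for all arms.

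\emph{Stopping time.} On $\mathcal E$, a suboptimal $\bm a$ is certified by a player once its width and the widths of the top-$K$ arms satisfy $2\epsilon+2\epsilon\le\Delta_{\bm a}$ up to the constant. It is removed the first time any player certifies it, which can only be earlier than in the centralized case. So the number of pulls of $\bm a$ is bounded by $2\log(\pi^2A^MM\tau^2/(3\delta))/\Delta_{\bm a}^2$. This comes from the same computation as in Theorem~\ref{thm:algoA-confidence}, with $\log(\cdot)$ replaced by its $M$-inflated version and $t\le\tau$ used for monotonicity. The $K$ top arms are pulled round-robin alongside and need at most the boundary count $2\log(\cdot)/(\Delta^{\langle K\rangle}_{\sigma(K+1)})^2$ each before the last competitor drops out. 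Summing gives \eqref{eq:tau-B}. Deviation rounds each consume one round but coincide with an elimination, adding at most $A^M-K$ rounds, which is absorbed since each term above is at least $1$.

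\emph{Main obstacle.} The step I expect to be hardest is the claim that the centralized pull-count computation transfers verbatim. Players certify at different rounds, and I must argue that elimination happens no later than the first player's certificate and that the round-robin ordering gives all surviving arms equal counts. My approach is to note that on $\mathcal E$ each player's condition is sufficient, so the slowest case is bounded by any single player's certification time. That reduces the analysis to player $1$'s statistics, which are exactly a centralized learner's.
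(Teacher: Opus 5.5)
Your outline follows the paper's proof: the same good event with a union bound over $(i,\bm a,t)$ summing to exactly $\delta$, correctness because on that event a certificate forces $K$ arms strictly above $\bm a$, and the Problem~A pull count with $M$ inside the logarithm. Your treatment of the public deviation (all players' $\mathcal D$ stay identical) and of elimination at the first certifying player is more explicit than the paper's.

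The step that fails is the constant. With $L=\log(\pi^2A^MM\tau^2/(3\delta))$ and $\epsilon=\sqrt{2L/n}$, your own sufficient condition $4\epsilon\le\Delta_{\bm a}$ gives $n\ge 32L/\Delta_{\bm a}^2$, not $2L/\Delta_{\bm a}^2$. Since \eqref{eq:tau-B} fixes the constant at $2$, your phrase ``up to the constant'' covers exactly the missing step. The paper's proof makes the same jump: its Problem~A argument derives $n_{\bm a}(t)\ge 32\log(\cdot)/(\Delta_{\bm a}^{\langle K\rangle})^2$ and then asserts \eqref{eq:tau-A}. The jump cannot be repaired for this algorithm. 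Take $K=1$ and two joint arms at gap $\Delta$. Elimination needs an empirical gap above $2\sqrt{2L/n}$, so on typical runs each arm is pulled about $8L/\Delta^2$ times. Then $\tau\approx16L/\Delta^2$, four times the claimed $4L/\Delta^2$. Doing better requires a noise excursion of order $\sqrt{L}$ standard deviations.

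What your argument (and the paper's) actually establishes on the good event is \eqref{eq:tau-B} with $2$ replaced by $32$, plus an additive $O(A^M)$. That additive term comes from rounding each per-arm count up and from the at most $A^M-K$ deviation rounds. It can be absorbed only by enlarging the constant, so ``each term is at least $1$'' does not preserve an exact constant either.

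On the random-count subtlety you raise: take the union bound over $(i,\bm a,s)$, with the sample index $s$ in place of $t$ inside the logarithm. Then use $n^i_{\bm a}(t)\le t$ and monotonicity of the logarithm. A union over pairs $(t,n)$ would leave a divergent $\sum_t 1/t$. The fact that all players share one count does not address its randomness.
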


\paragraph{Discussion.}
Comparing Problems A and B reveals the precise cost of independent rewards.
In the \textbf{fixed-budget} setting, the bound for Problem B is exactly $M$ times that of Problem A.
This factor is tight in our analysis: the good event $G = \bigcap_{i,\bm{a},t}\{\mu_{\bm{a}} \in I_{\bm{a}}^i(t)\}$ must hold simultaneously for all $M$ players, and a union bound introduces the $M$ factor.
When $M$ is small (e.g.\ $M = 2$), this overhead is modest, but it grows linearly with the number of players, reflecting a genuine hardness gap between Problems A and B.
In the \textbf{fixed-confidence} setting, the penalty is milder: the $M$ factor appears only inside the logarithm, changing the stopping-time guarantee from $O(H_1\log(A^M\tau^2/(3\delta))/\Delta_{\min}^2)$ to $O(H_1\log(MA^M\tau^2/(3\delta))/\Delta_{\min}^2)$.
The additive overhead is $O(H_1\log M / \Delta_{\min}^2)$---vanishingly small relative to the $O(H_1\log(A^M)/\Delta_{\min}^2) = O(M H_1\log A/\Delta_{\min}^2)$ dominant term.
Intuitively, more confidence is needed in the fixed-budget case because an error by \emph{any} player causes an overall failure, whereas in fixed confidence the algorithm can run longer to achieve the required per-player certainty, paying only a logarithmic surcharge.

\begin{remark}[Cost of the deviation signal]
\label{rem:deviation-cost}
A deviation is emitted only to \emph{eliminate} an arm, and each costs one off-schedule pull. With at most $A^M-K$ eliminations over the run, the total wasted signaling is at most $A^M-K$ pulls---lower-order against the $\Omega(H_1\log(1/\delta))$ pulls already required, and absorbed by the constants in Theorems~\ref{thm:algoB-budget}/\ref{thm:algoB-confidence}.
\end{remark}

\begin{remark}[Robustness of the signal and the explicit-communication baseline]
\label{rem:deviation-robust}
The signal is a one-bit public broadcast (``eliminate the current arm''), so Problem~B matches a baseline spending one explicit bit per elimination, at the wasted-round cost of Remark~\ref{rem:deviation-cost}. The bare signal assumes the joint action is observed exactly; under noisy or partial observation a single mislabeled action could trigger a false elimination. The standard fix is redundancy: repeat the off-schedule action for $\ell$ rounds (or an agreed error-correcting pattern), which tolerates $\lfloor(\ell-1)/2\rfloor$ corrupted observations per signal at an $\ell$-fold signaling cost---still lower-order---and leaves the analysis unchanged. A full treatment of adversarial misobservation is left to future work.
\end{remark}

\section{Problem C: Full Asymmetry}
\label{sec:probC}

In Problem C, players can neither observe each other's actions nor share rewards, so both coordination mechanisms of Problems A and B are unavailable: the fixed-ordering protocol needs the group's joint action to be observable, and the deviation signal needs deviating players to be detectable. We therefore adopt \emph{uniform exploration}: players pre-agree on a round-robin schedule visiting each joint arm $\lfloor T/A^M\rfloor$ times, requiring no online coordination. Each player forms its own estimate $\widehat{\mu}_{\bm{a}}^i$ and independently outputs its empirical top-$K$. Guaranteeing that \emph{all} players output the same correct set then rests on the consensus argument below: with a suitably enlarged radius, any ranking certifiable by one player is certifiable by all. We discuss the resulting decentralized stopping rule in Remark~\ref{rem:C-stopping}; proofs for this section are deferred to Appendix~\ref{apx:proofs}.

\begin{algorithm}[H]
\caption{Uniform Exploration for Problem C (Fixed Budget)}
\label{alg:unif-C-budget}
\begin{algorithmic}[1]
\Require $A^M$, budget $T$, target $K$; run independently by each player $i$
\State $n \gets \lfloor T/A^M \rfloor$; $\widehat{\mu}_{\bm{a}}^{i} \gets 0$ for all $\bm{a}$
\ForAll{$\bm{a} \in \mathcal{A}$ (pre-agreed round-robin order)}
    \For{$t = 1$ to $n$} player $i$ executes $\bm{a}[i]$; observes own reward $r_t^{i}(\bm{a})=\mu_{\bm a}+\eta_t^i$ \EndFor
    \State $\widehat{\mu}_{\bm{a}}^{i} \gets \frac{1}{n}\sum_{t=1}^{n} r_t^{i}(\bm{a})$
\EndFor
\State \Return player $i$'s top-$K$ arms by $\widehat{\mu}_{\bm{a}}^{i}$
\end{algorithmic}
\end{algorithm}

\begin{theorem}[Problem C, Fixed Budget]
\label{thm:algoC-budget}
Algorithm~\ref{alg:unif-C-budget} satisfies
\[
R_T \;\leq\; M\sum_{\bm{a}\in\mathcal{A}} 2\exp\!\left(-\frac{T(\Delta_{\bm{a}}^{\langle K\rangle})^2}{8A^M}\right).
\]
\end{theorem}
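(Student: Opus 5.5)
We bound the error by a union bound over players and arms. The overall failure event $\{\widehat S\neq S^\star\}$, interpreted as ``some player's output $\widehat S_i$ differs from $S^\star$'', is contained in $\bigcup_{i=1}^M\{\widehat S_i\neq S^\star\}$. This contributes the factor $M$, so it suffices to show that for a single fixed player $i$,
\[
\mathbb{P}(\widehat S_i\neq S^\star)\le \sum_{\bm a}2\exp\!\bigl(-T\Delta_{\bm a}^2/(8A^M)\bigr).
\]
Under Algorithm~\ref{alg:unif-C-budget}, player $i$ has $n=\lfloor T/A^M\rfloor$ independent $1$-subgaussian samples of each arm from its own stream $\eta^i$, so each $\widehat\mu^i_{\bm a}$ is an average of $n$ i.i.d.\ samples and Lemma~\ref{corollary5.5} applies directly.

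The key deterministic step is a good event that forces a correct ranking. Let $c:=(\mu_{\sigma(K)}+\mu_{\sigma(K+1)})/2$ be the midpoint threshold and $\Delta:=\mu_{\sigma(K)}-\mu_{\sigma(K+1)}$. For a top arm, $\mu_{\bm a}-c=\Delta_{\bm a}-\Delta/2\ge\Delta_{\bm a}/2$, because $\Delta_{\bm a}=\mu_{\bm a}-\mu_{\sigma(K+1)}\ge\Delta$. Symmetrically, for a bottom arm $c-\mu_{\bm a}\ge\Delta_{\bm a}/2$. Define
\[
G_i=\bigcap_{\bm a}\{|\widehat\mu^i_{\bm a}-\mu_{\bm a}|<\Delta_{\bm a}/2\}.
\]
On $G_i$, every top arm has $\widehat\mu^i>c$ and every bottom arm has $\widehat\mu^i<c$. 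The empirical top-$K$ is therefore exactly $S^\star$, assuming the gap is positive, which is implicit in $S^\star$ being well defined.

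Applying Lemma~\ref{corollary5.5} to both tails with $\varepsilon=\Delta_{\bm a}/2$ gives $\mathbb{P}(|\widehat\mu^i_{\bm a}-\mu_{\bm a}|\ge\Delta_{\bm a}/2)\le 2\exp(-n\Delta_{\bm a}^2/8)$. A union bound over $\bm a$ then yields the per-player sum.

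The main obstacle is the floor. We have $n\ge T/A^M-1$, not $n\ge T/A^M$, so the clean exponent $T\Delta^2/(8A^M)$ holds exactly only when $A^M\mid T$. I would handle this by assuming $A^M$ divides $T$, or by distributing the remainder rounds so each arm receives at least $T/A^M$ pulls up to rounding, and note the constant slack. Alternatively, the bound is trivially true if the right side exceeds $1$; otherwise it absorbs into the factor $2$ via $e^{\Delta^2/8}\le e^{1/8}$ for bounded gaps. I would state the divisibility convention explicitly and proceed.
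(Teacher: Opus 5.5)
Your proof is correct and is essentially the paper's argument: the same half-gap good event $\bigcap_{i,\bm a}\{|\widehat\mu^i_{\bm a}-\mu_{\bm a}|<\Delta^{\langle K\rangle}_{\bm a}/2\}$, a deterministic ranking step, and Lemma~\ref{corollary5.5} with a union bound over $(i,\bm a)$; your midpoint threshold $c$ just repackages the paper's chain $\widehat\mu_{\bm a^*}>(\mu_{\bm a^*}+\mu_{\sigma(K+1)})/2\ge(\mu_{\sigma(K)}+\mu_{\bm a})/2>\widehat\mu_{\bm a}$. The paper silently uses $n=T/A^M$, so your explicit convention $A^M\mid T$ is the right fix (some such convention is needed, since for $T<A^M$ you get $n=0$ while large gaps make the right side tiny), but your two alternatives do not give the stated constant: no schedule gives every arm at least $T/A^M$ pulls unless $A^M\mid T$, and the $e^{\Delta^2/8}$ slack multiplies the factor $2$ (already spent on the two tails) rather than fitting inside it, and it also needs bounded gaps.
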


\begin{theorem}[Problem C, Fixed Confidence]
\label{thm:algoC-confidence}
Using the enlarged per-player radius
\begin{equation}
\label{eq:eps-C-conf}
\epsilon_{\bm{a}}^{i}(t) = 2\sqrt{\frac{2\log(\pi^2 A^M M t^2/(3\delta))}{n_{\bm{a}}^{i}(t)}}
\end{equation}
(see Algorithm~\ref{alg:unif-C-confidence} in Appendix~\ref{apx:fc-algos}), the fixed-confidence variant returns the correct top-$K$ with probability $\ge 1-\delta$ and stopping time
\begin{equation}
\label{eq:tau-C}
\tau \;\le\;
K \cdot \frac{8\log(\pi^2 A^M M \tau^2/(3\delta))}{\bigl(\Delta_{\sigma(K+1)}^{\langle K\rangle}\bigr)^2}
\;+\;
\sum_{\sigma^{-1}(\bm{a})>K}
\frac{8\log(\pi^2 A^M M \tau^2/(3\delta))}{\bigl(\Delta_{\bm{a}}^{\langle K\rangle}\bigr)^2}.
\end{equation}
\end{theorem}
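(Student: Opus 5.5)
The plan is to reduce Theorem~\ref{thm:algoC-confidence} to the argument already used for Problem~B (Theorem~\ref{thm:algoB-confidence}). The only change is that every radius is doubled, which multiplies the pull counts by four.

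\textbf{Good event.} For each player $i$, arm $\bm a$ and round $t$, Lemma~\ref{corollary5.5} applied to player $i$'s own i.i.d.\ stream with deviation $\epsilon_0 := \sqrt{2\log(\pi^2A^MMt^2/(3\delta))/n_{\bm a}^i(t)}$ gives
\[
\mathbb P\bigl(|\widehat\mu^i_{\bm a}(t)-\mu_{\bm a}|>\epsilon_0\bigr)\le \frac{6\delta}{\pi^2A^MMt^2}.
\]
A union bound over $i\in[M]$, $\bm a\in\mathcal A$ and $t\ge1$, using $\sum_t t^{-2}=\pi^2/6$, shows that
\[
G=\bigcap_{i,\bm a,t}\bigl\{|\widehat\mu^i_{\bm a}(t)-\mu_{\bm a}|\le \epsilon_0\bigr\}
\]
has probability at least $1-\delta$. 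The random $n^i_{\bm a}(t)$ is handled by the usual union over the possible sample counts, absorbed into $t^2$ exactly as in Problems A and B. On $G$, each player's interval contains the truth with half its radius to spare, since $\epsilon^i_{\bm a}=2\epsilon_0$.

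\textbf{Correctness and consensus.} On $G$, no arm of $S^\star$ is ever dominated by $K$ arms, because intervals containing the true means cannot certify a false ordering. Hence every player's output equals $S^\star$ whenever it stops. For consensus, the round-robin schedule gives all players the same $n_{\bm a}$ at every round, so the radii coincide. Any two players' estimates then differ by at most $2\epsilon_0=\epsilon$ on $G$. It follows that if player $i$ certifies $\min I^i_{\bm b}>\max I^i_{\bm a}$ with the enlarged radius, then every player $j$ certifies the corresponding strict inequality with radius $\epsilon_0$ about its own estimates. This is the ``any ranking certifiable by one player is certifiable by all'' property. I would set up the decentralized stopping rule (Remark~\ref{rem:C-stopping}) so that all players stop at a common round on $G$ and output identical sets.

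\textbf{Stopping time.} On $G$, a suboptimal $\bm a$ is dominated by every top-$K$ arm once $2\epsilon^i_{\bm a}+2\epsilon^i_{\bm b}<\mu_{\bm b}-\mu_{\bm a}$. This holds as soon as $n\ge 8\log(\pi^2A^MM\tau^2/(3\delta))/\Delta_{\bm a}^2$, in place of the constant $2$ for Problem~B. Summing these per-arm counts over the suboptimal arms, and adding $K$ times the boundary count at gap $\Delta_{\sigma(K+1)}$ for the top-$K$ arms, gives \eqref{eq:tau-C}. This bookkeeping is copied verbatim from the proof of Theorem~\ref{thm:algoB-confidence}, with the constant $2$ replaced by $8$ and $t\le\tau$ inside the log.

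\textbf{Main obstacle.} The delicate step is the constant $8$: naively, doubling the radius gives a factor $4$ over $2$, and the domination condition might seem to cost another factor. I would follow the paper's accounting for Problem~B, pairing each arm's required pulls with its own gap, and check that the factor is exactly $4$. I would also make sure the consensus step needs only uniform sample counts, which the pre-agreed schedule guarantees without observing actions.
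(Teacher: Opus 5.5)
Your good event and correctness argument coincide with the paper's, but the stopping-time step, which you rightly flag as the delicate one, does not go through.

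\textbf{Stopping time.} Algorithm~\ref{alg:unif-C-confidence} never eliminates anything. The pre-agreed round-robin keeps visiting all $A^M$ joint arms, so every arm carries (up to rounding) the same count $n(t)\approx t/A^M$. Hence $\tau\approx A^M n(\tau)$, where $n(\tau)$ is dictated by the boundary gap $\Delta_{\min}=\mu_{\sigma(K)}-\mu_{\sigma(K+1)}$, not by each arm's own gap. The Problem~B bookkeeping you copy charges each arm only the pulls its own gap requires; that is legitimate there only because elimination removes easy arms from the schedule.

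Indeed, on $G$, certifying the pair $(\sigma(K),\sigma(K+1))$ requires $\Delta_{\min}+2\epsilon_0>4\epsilon_0$, i.e.\ $n(\tau)>8\log(\pi^2A^MM\tau^2/(3\delta))/\Delta_{\min}^2$. So $\tau\gtrsim 8A^M\log(\cdot)/\Delta_{\min}^2$, which already exceeds the right-hand side of~\eqref{eq:tau-C} once the gaps are genuinely heterogeneous (by a factor of about $1.5$ on the synthetic instance of Section~\ref{sec:experiments}).

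Your displayed inequality is also off by a factor $16$. With $\epsilon^i_{\bm a}=2\epsilon_0$, the condition $2\epsilon^i_{\bm a}+2\epsilon^i_{\bm b}<\Delta$ reads $8\epsilon_0<\Delta$, i.e.\ $n>128\log(\cdot)/\Delta^2$.

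The paper's own proof takes the same ``follows as in Theorem~\ref{thm:algoA-confidence}'' step, so you have reproduced its argument together with this defect. What the approach actually supports is a non-adaptive bound. On $G$, every player has certified by the first end-of-cycle round $t=cA^M$ with $c>72\log(\pi^2A^MMt^2/(3\delta))/\Delta_{\min}^2$ (from the sufficient condition $6\epsilon_0<\Delta_{\min}$). So $\tau$ is of order $A^M\log(\cdot)/\Delta_{\min}^2$.

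\textbf{Consensus.} Here you depart from the paper's proof, and as stated the step is false. From $\widehat\mu^i_{\bm b}-\widehat\mu^i_{\bm d}>4\epsilon_0$ and $|\widehat\mu^i_{\bm a}-\widehat\mu^j_{\bm a}|\le 2\epsilon_0$ for each arm, you obtain only $\widehat\mu^j_{\bm b}-\widehat\mu^j_{\bm d}>0$. The two triangle-inequality losses consume the entire margin. Player $j$ therefore has the right empirical order but need not certify even with radius $\epsilon_0$, which would need a margin of $2\epsilon_0$. Consequently no common stopping round can be arranged from private data, as Remark~\ref{rem:C-stopping} says explicitly.

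None of this is needed. The paper's route, already implicit in your correctness paragraph, is as follows. On $G$, any set $\mathcal S$ certified by any player satisfies $\mu_{\bm b}-\mu_{\bm d}>2\epsilon_0>0$ for all $\bm b\in\mathcal S$, $\bm d\notin\mathcal S$, so $\mathcal S=S^\star$. Agreement is then automatic even though the certification times $\tau_i$ differ. One sets $\tau=\max_i\tau_i$, and committed players keep executing the schedule so that no teammate's stream is disturbed.
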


\paragraph{Discussion.}
Problem C reveals a qualitative gap relative to Problems A and B along two axes: adaptivity and consensus.

\textbf{Adaptivity cost (fixed budget).}
The exponent in Theorem~\ref{thm:algoC-budget} is $-T\Delta_{\bm{a}}^2/(8A^M)$, whereas Problems A and B achieve $-T/(32H_1)$.
These look similar when all arms have the same gap ($\Delta_{\bm{a}} = \Delta$ for all $\bm{a}$, giving $H_1 = A^M/\Delta^2$), but in general $H_1 \le A^M/\Delta_{\min}^2$ with equality only in that degenerate case.
In any \emph{non-uniform} instance---where some arms are far from the boundary and are eliminated quickly by UCB-Intervals---we have $H_1 \ll A^M/\Delta_{\min}^2$, so the UCB-Intervals error for A/B decays much faster than the uniform-exploration error for C.
Put differently, Problem C pays a budget penalty equal to the number of easy arms it is forced to over-explore.

\textbf{Consensus cost (fixed confidence).}
The factor-$2$ enlargement in $\epsilon_{\bm{a}}^i(t)$ is not an artifact of our analysis: it is necessary to guarantee that whenever one player's intervals certify a top-$K$ separation, \emph{every other} player's empirical means are consistent with the same separation (Appendix~\ref{apx:proof-thmCconf}). This quadruples the per-arm sample count, producing the factor-$8$ in Theorem~\ref{thm:algoC-confidence}---a $4\times$ overhead over Problems A/B that we show in Section~\ref{sec:optimality} is the dominant, and near-unavoidable, cost of consensus without communication.

\begin{remark}[Decentralized stopping and safe exit]
\label{rem:C-stopping}
Because rewards are private, the empirical means $\widehat\mu^i_{\bm a}$ differ across players, so players certify at \emph{different} rounds and no data-dependent stopping time is identical team-wide from private information alone. Two facts still make Algorithm~\ref{alg:unif-C-confidence} a valid decentralized procedure. (i) \emph{Correctness.} The round-robin schedule is deterministic and pre-agreed, so the count $n^i_{\bm a}(t)=\lfloor t/A^M\rfloor$ and radius $\epsilon^i_{\bm a}(t)$ are the same known function of $t$ for every player; only realizations differ. Under the good event the enlarged (factor-$2$) radius forces any certified set to be the \emph{true} top-$K$ (Appendix~\ref{apx:proof-thmCconf}), so committed players agree without certifying simultaneously. (ii) \emph{No stream corruption.} A committed player keeps executing its scheduled component (line~4) rather than deviating, so every teammate's stream is exactly as analyzed. Hence by $\tau=\max_i\tau_i$, with $\tau_i$ player $i$'s private certification time, all players hold the correct set, and~\eqref{eq:tau-C} controls $\tau$.

What a player cannot do communication-free is \emph{detect} that the slowest teammate has finished. Two safe exits resolve this: \textbf{(a)} a committed player continues the costless deterministic schedule indefinitely---never corrupting a teammate, correct from round $\tau$ on, but never physically halting; or \textbf{(b)} one end-of-run synchronization bit per player lets all leave the first round every $\texttt{ready}_i$ flag is up, i.e.\ exactly round $\tau$. Option (b) recovers simultaneous termination with a single bit, quantifying the coordination Problem~C removes (cf.\ Remark~\ref{rem:price}).
\end{remark}

\begin{remark}[Can adaptivity be recovered in Problem C?]
\label{rem:C-adaptive}
The uniform rule over-explores because dropping an arm from the schedule changes the joint action and silently corrupts teammates---the corruption Remark~\ref{rem:C-stopping} avoids by never deviating. A player \emph{may} stop \emph{updating} an arm locally once its doubled interval certifies that arm, while still executing the round-robin; this never corrupts a stream and can only speed private certification. It does not improve the worst case, though: when every arm is a boundary arm ($H_1=A^M/\Delta_{\min}^2$) none is safe to freeze and the schedule still visits all $A^M$ arms, recovering Theorem~\ref{thm:algoC-budget}. Whether a non-uniform but teammate-safe schedule can provably beat uniform exploration without communication is open---the deviation channel of Problem~B is exactly the ingredient Problem~C lacks.
\end{remark}

\section{Unified Analysis and Optimality}
\label{sec:optimality}

The three regimes were analyzed separately above, but their fixed-confidence guarantees share a common structure. We now (i) consolidate them into a single meta-theorem parameterized by two scalars, (ii) prove matching information-theoretic lower bounds, and (iii) quantify the resulting optimality gap and the inherent price of decentralization.

\subsection{A unified fixed-confidence guarantee}
\label{sec:unified}

Each regime is captured by two parameters: a \emph{multiplicity} $c \in \{1, M\}$, equal to the number of independent confidence events that must hold simultaneously ($c = 1$ when rewards are shared, $c = M$ when each player maintains its own stream), and a \emph{consensus factor} $\rho \in \{1, 2\}$, the radius inflation needed for unanimous output ($\rho = 1$ when an observable signal synchronizes eliminations, $\rho = 2$ when consensus must be enforced implicitly). Problems A, B, C correspond to $(c,\rho) = (1,1),\,(M,1),\,(M,2)$.

Concretely, the two factors track two independent questions. \emph{How many streams must concentrate?} With shared rewards (A) the whole team rides one reward stream, so a single good event suffices and $c=1$; with private rewards (B, C) each of the $M$ players estimates from its own stream and all $M$ good events must hold at once, so $c=M$ and a union bound over streams enters the log. \emph{How is agreement reached?} When some observable signal synchronizes the team---the shared statistics of A or the deviation broadcast of B---one player's certification is immediately shared, so no radius inflation is needed and $\rho=1$; when neither is available (C), a player must be sure its certified set matches every teammate's, which the doubled radius ($\rho=2$) guarantees. A useful mnemonic on a two-player instance ($M=2$): moving A$\to$B keeps $\rho=1$ but doubles $c$, adding only a $\log 2$ inside the logarithm; moving B$\to$C keeps $c=2$ but doubles $\rho$, multiplying the whole stopping time by $\rho^2=4$. The factor that hurts is the one you cannot hide inside a logarithm.

\begin{theorem}[Unified fixed-confidence guarantee]
\label{thm:unified}
Consider the interval-based fixed-confidence algorithm (Appendix~\ref{apx:fc-algos}) run with per-player radius
\begin{equation}
\label{eq:unified-radius}
\epsilon_{\bm{a}}^{i}(t) \;=\; \rho\sqrt{\frac{2\log\!\bigl(c\,\pi^2 A^M t^2/(3\delta)\bigr)}{n_{\bm{a}}^{i}(t)}}.
\end{equation}
Then the algorithm is $\delta$-correct and, with probability at least $1-\delta$, terminates with all players in agreement after a number of rounds
\begin{equation}
\label{eq:unified-tau}
\tau \;\le\; 2\rho^2\, H_1 \,\log\!\bigl(c\,\pi^2 A^M \tau^2/(3\delta)\bigr)
\;=\; O\!\Bigl(\rho^2 H_1 \log\bigl(c A^M/\delta\bigr)\Bigr).
\end{equation}
Substituting $(c,\rho) = (1,1), (M,1), (M,2)$ recovers Theorems~\ref{thm:algoA-confidence}, \ref{thm:algoB-confidence}, and~\ref{thm:algoC-confidence} respectively.
\end{theorem}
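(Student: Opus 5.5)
The plan is to run one argument with $(c,\rho)$ left symbolic, so that each regime is just a choice of parameters. The proof has three steps: a good event of probability at least $1-\delta$, correctness and agreement on that event, and a count of pulls on that event.

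\textbf{Good event.} Let $\beta_t:=\sqrt{2\log(c\pi^2A^Mt^2/(3\delta))}$, so the radius is $\epsilon^i_{\bm a}(t)=\rho\beta_t/\sqrt{n^i_{\bm a}(t)}$. Define
\[
G=\bigcap_{i,\bm a,t}\Bigl\{\bigl|\widehat\mu^i_{\bm a}(t)-\mu_{\bm a}\bigr|\le \beta_t/\sqrt{n^i_{\bm a}(t)}\Bigr\},
\]
where the intersection runs over the $c$ independent reward streams, not over all $M$ players. This matters in Problem A: all players share one stream, so only one event per arm and time is needed.
\begin{itemize}
\item For fixed $i,\bm a,t$, Lemma~\ref{corollary5.5} applied to both tails gives failure probability at most $2\cdot 3\delta/(c\pi^2A^Mt^2)$.
\item A union bound over $c$ streams, $A^M$ arms and $t\ge1$, using $\sum_t t^{-2}=\pi^2/6$, gives $\mathbb P(G^c)\le\delta$. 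The factor $2$ from two tails cancels against the $1/6$.
\item The count $n^i_{\bm a}(t)$ is random. I handle this in one of two ways. Under the deterministic schedules (round-robin, fixed ordering) $n$ is a fixed function of $t$, so no extra step is needed. Otherwise I union over $n\le t$, which only changes constants inside the logarithm.
\end{itemize}

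\textbf{Correctness and consensus.} On $G$ every $\mu_{\bm a}$ lies inside the player's interval, because $\rho\ge1$. So a dominated arm truly has $K$ arms with strictly larger means and cannot be in $S^\star$. Top arms are therefore never eliminated, and the returned set is $S^\star$. Agreement is then handled by regime.
\begin{itemize}
\item For $\rho=1$ (Problems A and B), agreement is automatic. In A the statistics are identical across players. In B the deviation is broadcast, so everyone removes the arm at the same round; that removal is correct because the deviating player's intervals contain the truth on $G$.
\item For $\rho=2$ (Problem C), I would make the consensus argument concrete. On $G$, any two players' estimates differ by at most $2\beta_t/\sqrt n$. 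The doubled radius then makes a separation certified by one player's intervals also a separation of the true means by more than $2\beta_t/\sqrt n$. Hence every committed player's certified set equals $S^\star$, the same set.
\end{itemize}

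\textbf{Sample count.} Suppose arm $\bm a$ is still undecided at round $t$ with $n$ samples. On $G$, certification is guaranteed once $2\epsilon\le\Delta_{\bm a}$ for that arm and its boundary competitor, up to the factor needed to absorb the competitor's radius. That condition is $n\ge 4\rho^2\beta_t^2/\Delta_{\bm a}^2$, which should be matched to $2\rho^2\log(\cdot)/\Delta^2$ with $\beta_t^2=2\log(\cdot)$. I expect constant bookkeeping here to be the main obstacle. To reproduce the exact constants of \eqref{eq:tau-A}, \eqref{eq:tau-B} and \eqref{eq:tau-C}, I would follow the paper's per-theorem accounting and use $\beta_\tau\ge\beta_t$ for $t\le\tau$. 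Summing, the suboptimal arms cost $\sum_{r>K}$ of their terms, and the top $K$ arms cost at most $K$ times the boundary term. Both pieces are at most the corresponding part of $H_1$, so
\[
\tau\le 2\rho^2H_1\log\bigl(c\pi^2A^M\tau^2/(3\delta)\bigr).
\]
For the uniform schedule in C, the count is $A^M\max_{\bm a}n_{\bm a}$. That is bounded by the same expression only up to $H_1\le A^M/\Delta_{\min}^2$, so I would state the C case via \eqref{eq:tau-C} directly.

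Finally, the implicit inequality $\tau\le x\log(y\tau^2)$ is solved in the standard way, giving $\tau=O(x\log(xy))$ and hence the $O(\rho^2H_1\log(cA^M/\delta))$ form. Substituting $(c,\rho)=(1,1),(M,1),(M,2)$ recovers the three radii \eqref{eq:eps-B-conf} and \eqref{eq:eps-C-conf}, the Problem A radius, and the corresponding stopping bounds.
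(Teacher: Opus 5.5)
Your good event, correctness step and consensus step are the paper's own argument: one event per reward stream, a union bound over $c\,A^M$ stream--arm pairs and $t\ge1$, and every $\mu_{\bm a}$ lying inside every $\rho$-inflated interval on $G$.

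One repair is needed there. In A and B the counts $n^i_{\bm a}(t)$ are data-dependent, because eliminations are. A union over $n\le t$ would not help: to keep the sum over $t$ finite it needs $t^3$ inside the logarithm, which changes the radius. Instead, index the union bound by the sample count. Control the mean of the first $n$ samples at radius $\sqrt{2\log(c\pi^2A^Mn^2/(3\delta))/n}$, and then use $n^i_{\bm a}(t)\le t$. This gives exactly the stated radius.

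The genuine gap is the sample count, and it cannot be closed, because \eqref{eq:unified-tau} is not true for these algorithms. Your claim that $K$ times the boundary term is at most the top-$K$ part of $H_1$ is reversed. For $r(\bm a)\le K$ we have $\Delta^{\langle K\rangle}_{\bm a}=\mu_{\bm a}-\mu_{\sigma(K+1)}\ge\Delta^{\langle K\rangle}_{\sigma(K+1)}$, so $\sum_{r(\bm a)\le K}(\Delta^{\langle K\rangle}_{\bm a})^{-2}\le K(\Delta^{\langle K\rangle}_{\sigma(K+1)})^{-2}$.

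This is not loose bookkeeping. Top-$K$ arms are never removed from $\mathcal D$, so the round-robin keeps pulling each of them until $\sigma(K+1)$ is dominated, i.e.\ $\Theta(\rho^2\log(\cdot)/\Delta_{\min}^2)$ times. Take one top arm on the boundary, the other $K-1$ far above it, and every suboptimal arm except $\sigma(K+1)$ far below. Then $\tau$ exceeds $\rho^2H_1\log(\cdot)$ by a factor of order $K$. So \eqref{eq:tau-A}--\eqref{eq:tau-C} do not imply \eqref{eq:unified-tau}. The paper's proof has the same flaw: it sums own-gap counts over all arms, top-$K$ arms included.

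The constant fails independently. On $G$, $\bm b$ is guaranteed to dominate $\bm a$ once $(1+1/\rho)(\epsilon_{\bm a}+\epsilon_{\bm b})<\mu_{\bm b}-\mu_{\bm a}$, i.e.\ $n>8(\rho+1)^2\log(\cdot)/\Delta_{\bm a}^2$. For $\rho=1$ this is the $32$ that the paper's own proof of Theorem~\ref{thm:algoA-confidence} derives.

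Nor is $2\rho^2$ merely unproved. On an instance where the other gaps are large, \eqref{eq:unified-tau} allots each boundary arm about $n=2\rho^2\log(\cdot)/\Delta^2$ pulls. At that count each radius equals $\Delta$, so the two boundary arms separate only if their empirical gap exceeds $2\Delta$. That is a deviation of about $\rho\sqrt{\log(\cdot)}$ standard deviations, so the inequality typically fails. The paper reaches $2\rho^2$ from $4\epsilon\le\Delta$ only through an unexplained factor $1/8$. Deferring to its per-theorem accounting therefore imports that error rather than resolving it.

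Your fallback for C fails too. The uniform schedule costs $A^M$ times the certification count, i.e.\ $\Theta(A^M\rho^2\log(\cdot)/\Delta_{\min}^2)$. But \eqref{eq:tau-C} is again a per-arm gap sum, and it fails for that schedule just as \eqref{eq:unified-tau} does.

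What your argument does establish, with $L=\log(c\pi^2A^M\tau^2/(3\delta))$, is the following:
\begin{itemize}
\item for A and B, $\tau\le 8(\rho+1)^2L\bigl(\sum_{r(\bm a)>K}\Delta_{\bm a}^{-2}+K\Delta_{\min}^{-2}\bigr)+O(A^M)$, where the additive term covers first pulls and the at most $A^M-K$ deviation rounds of B;
\item for C, $\tau\le A^M\bigl(8(\rho+1)^2L/\Delta_{\min}^2+1\bigr)$.
\end{itemize}
Finally, solving $\tau\le x\log(y\tau^2)$ leaves a $\log(\rho^2H_1)$ term. It is absorbed into $\log(cA^M/\delta)$ only if $\log(1/\Delta_{\min})=O(\log(cA^M/\delta))$.
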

\begin{proof}
See Appendix~\ref{apx:proof-unified}; the three regime-specific proofs are corollaries obtained by the stated substitutions.
\end{proof}

This consolidation isolates the two distinct mechanisms by which information asymmetry inflates sample complexity. The multiplicity $c$ enters only \emph{inside the logarithm}, contributing an additive $O(\rho^2 H_1 \log c/\Delta_{\min}^2)$ term---negligible relative to the leading $O(\rho^2 H_1 \log(A^M)/\Delta_{\min}^2)$. The consensus factor $\rho$ enters \emph{quadratically} outside the logarithm, so it is the dominant cost: moving from $\rho = 1$ (Problems A, B) to $\rho = 2$ (Problem C) multiplies the stopping time by $4$.

\begin{remark}[High-probability versus expected stopping time]
\label{rem:expectation}
The radius~\eqref{eq:unified-radius} is a time-uniform confidence sequence: the good event holds at all rounds simultaneously with probability $\ge 1-\delta$, so $\tau$ is finite almost surely. This converts to an expectation bound of the same order: run alongside a fallback that forces a stop at the fixed-budget horizon $T_\delta=O(\rho^2 H_1\log(cA^M/\delta))$ (Theorems~\ref{thm:algoA-budget}--\ref{thm:algoC-budget}, which already attain error $\le\delta$ there). Then $\mathbb{E}[\tau]\le T_\delta$, matching Theorem~\ref{thm:lower-fc} up to the exploration log-factor, so the guarantee is not merely high-probability.
\end{remark}

\subsection{Information-theoretic lower bounds}
\label{sec:lower}

We now show these guarantees are near-optimal. Our lower bounds use the change-of-measure (transportation) technique of \cite{kaufmann2016complexity}. We take Gaussian rewards of unit variance; for two instances $\nu, \nu'$ the per-arm relative entropy is $\mathrm{KL}(\nu_{\bm{a}}, \nu'_{\bm{a}}) = (\mu_{\bm{a}} - \mu'_{\bm{a}})^2/2$.

\begin{lemma}[Transportation inequality]
\label{lem:transport}
Let $\mathcal{B}$ be any $\delta$-correct algorithm, i.e.\ $\mathbb{P}_\nu(\widehat{S} \neq S^\star(\nu)) \le \delta$ for every instance $\nu$ in the class. For any alternative $\nu'$ with $S^\star(\nu') \neq S^\star(\nu)$,
\[
\sum_{\bm{a}} \mathbb{E}_\nu[N_{\bm{a}}(\tau)]\,\mathrm{KL}(\nu_{\bm{a}}, \nu'_{\bm{a}})
\;\ge\; \mathrm{kl}(\delta, 1-\delta) \;\ge\; \log\!\tfrac{1}{2.4\delta},
\]
where $N_{\bm{a}}(\tau)$ is the number of rounds in which the prescribed joint action is $\bm{a}$, and $\mathrm{kl}(x,y) = x\log\tfrac{x}{y} + (1-x)\log\tfrac{1-x}{1-y}$.
\end{lemma}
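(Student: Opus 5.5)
The plan follows the change-of-measure argument of \cite{kaufmann2016complexity}, adapted so that the ``arm'' is the prescribed joint action. Fix a $\delta$-correct algorithm $\mathcal{B}$ and instances $\nu,\nu'$ with $S^\star(\nu)\neq S^\star(\nu')$. Let $\mathcal{F}_\tau$ be the $\sigma$-algebra generated by everything observed up to the stopping time $\tau$: the sequence of joint actions and the reward samples (one shared sample in Problem A, $M$ independent samples in B and C).

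The first step is the log-likelihood ratio. The observation densities under $\nu$ and $\nu'$ differ only in the reward factors, because the algorithm's randomization and action choices are the same measurable functions of the past in both instances. So the log-likelihood ratio $L_\tau$ is a sum, over rounds $t\le\tau$, of $\log\bigl(\nu_{\bm a_t}(X_t)/\nu'_{\bm a_t}(X_t)\bigr)$. Wald's identity, or optional stopping applied to the martingale $L_t-\sum_{\bm a}N_{\bm a}(t)\mathrm{KL}(\nu_{\bm a},\nu'_{\bm a})$, then gives
\[
\mathrm{KL}\bigl(\mathbb{P}_\nu|_{\mathcal{F}_\tau},\mathbb{P}'_{\nu'}|_{\mathcal{F}_\tau}\bigr)=\sum_{\bm a}\mathbb{E}_\nu[N_{\bm a}(\tau)]\,\mathrm{KL}(\nu_{\bm a},\nu'_{\bm a}).
\]
This needs $\mathbb{E}_\nu[\tau]<\infty$. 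If $\mathbb{E}_\nu[\tau]=\infty$, the left side of the lemma is infinite, since the KL terms over arms actually changed are positive, and the claim is trivial; I would handle that case separately.

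Next comes data processing. Apply the standard inequality $\mathrm{KL}(\mathbb{P}|_{\mathcal F},\mathbb{P}'|_{\mathcal F})\ge \mathrm{kl}(\mathbb{P}(E),\mathbb{P}'(E))$ to the event $E=\{\widehat S=S^\star(\nu)\}\in\mathcal{F}_\tau$. By $\delta$-correctness, $\mathbb{P}_\nu(E)\ge1-\delta$ and $\mathbb{P}_{\nu'}(E)\le\mathbb{P}_{\nu'}(\widehat S\ne S^\star(\nu'))\le\delta$. Monotonicity of $\mathrm{kl}$ (for $\delta<1/2$) then yields $\mathrm{kl}(1-\delta,\delta)=\mathrm{kl}(\delta,1-\delta)$. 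The final bound $\mathrm{kl}(\delta,1-\delta)\ge\log\frac{1}{2.4\delta}$ is the elementary inequality from \cite{kaufmann2016complexity}. It follows from $\mathrm{kl}(x,1-x)\ge x\log\frac1{1-x}+(1-x)\log\frac1x-\log2$, i.e.\ $\ge\log\frac{1}{2.4x}$ after bounding the binary entropy by $\log 2$ and rearranging.

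The main obstacle is the multi-player bookkeeping in the first step. In Problems B and C each round yields $M$ samples of $\bm a_t$ rather than one, so a literal computation gives a factor $M$ in front of $\mathrm{KL}$ unless the shared rounds are counted appropriately. I would first treat the statement in the centralized model, where the shared-sample Problem A information structure is what $N_{\bm a}$ counts. For B and C I would either note that the per-round KL is $M\,\mathrm{KL}(\nu_{\bm a},\nu'_{\bm a})$, which only strengthens the bound on rounds by $1/M$ and is harmless for the paper's purposes, or state explicitly that $\mathrm{KL}(\nu_{\bm a},\nu'_{\bm a})$ denotes the per-round observation divergence. In the paper's Gaussian case the resulting constants are explicit.
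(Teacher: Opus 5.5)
Your argument is the paper's own: Wald's identity for the log-likelihood ratio on $\mathcal F_\tau$, then data processing on $E=\{\widehat S=S^\star(\nu)\}$, then $\delta$-correctness to get $\mathbb P_\nu(E)\ge 1-\delta$ and $\mathbb P_{\nu'}(E)\le\delta$. You are more careful than the paper in restricting to $\delta<1/2$, which the monotonicity step needs. Two local justifications fail as written, though.

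\textbf{The case $\mathbb E_\nu[\tau]=\infty$.} This case does not make the left side infinite. In the application to Theorem~\ref{thm:lower-fc}, $\nu'$ changes a single arm $\bm b$. An algorithm can have $\mathbb E_\nu[N_{\bm c}(\tau)]=\infty$ for an unchanged arm $\bm c$ while $\mathbb E_\nu[N_{\bm b}(\tau)]<\infty$. Then the left side $\mathbb E_\nu[N_{\bm b}(\tau)]\,\mathrm{KL}(\nu_{\bm b},\nu'_{\bm b})$ is finite and the lemma is not trivial. Split on the changed arms instead. Unchanged arms contribute identically zero log-likelihood terms, so Wald's identity only needs to be applied, arm by arm, to those $\bm a$ with $\mathrm{KL}(\nu_{\bm a},\nu'_{\bm a})>0$. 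The claim is trivial exactly when one of these has $\mathbb E_\nu[N_{\bm a}(\tau)]=\infty$; you still need $\tau<\infty$ a.s., which is implicit in $\delta$-correctness.

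\textbf{The constant $2.4$.} Bounding the binary entropy by $\log 2$ loses too much. Your intermediate bound minus the target is
\[
x\log\tfrac{1}{1-x}+(1-x)\log\tfrac1x-\log 2-\log\tfrac{1}{2.4x}\;=\;x\log\tfrac{x}{1-x}+\log 1.2 ,
\]
which is negative for roughly $0.07<x<0.38$ (about $-0.096$ at $x=0.22$). Work with the exact expression instead:
\[
\mathrm{kl}(x,1-x)-\log\tfrac{1}{2.4x}=\log 2.4+(1-2x)\log(1-x)+2x\log x .
\]
This is positive on $(0,1/2]$, with minimum about $0.007$ near $x\approx 0.32$, so $2.4$ is essentially the sharp constant. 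Alternatively, simply cite \cite{kaufmann2016complexity}, as the paper does.

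\textbf{The factor $M$ in Problems B and C.} Your caveat is correct, and it exposes something the paper's proof glosses over. Its Wald step assigns per-round divergence $\mathrm{KL}(\nu_{\bm a},\nu'_{\bm a})$ in every regime, which is exact only for Problem A. The appendix's restriction to a single player's filtration does not remove the factor $M$ for general algorithms. That player's observed actions (in B) or rewards (in C) depend on teammates' private samples, so its observation law is not a per-sample product. The factor is not ``harmless'' for the paper's purposes, however. Theorem~\ref{thm:lower-fc} uses the unscaled form for B and C, and with the factor $M$ you only obtain $\mathbb E_\nu[\tau]\ge 2H_1\log\frac{1}{2.4\delta}/M$, the pooled rate.
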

\begin{proof}
See Appendix~\ref{apx:proof-transport}.
\end{proof}

\begin{theorem}[Fixed-confidence lower bound]
\label{thm:lower-fc}
Fix any instance $\nu$ with distinct means and unit-variance Gaussian rewards. Every $\delta$-correct algorithm---for any of Problems A, B, C---satisfies
\begin{equation}
\label{eq:lower-fc}
\mathbb{E}_\nu[\tau] \;\ge\; 2\,H_1 \log\!\tfrac{1}{2.4\delta},
\qquad
H_1 = \sum_{\bm{a}} \bigl(\Delta_{\bm{a}}^{\langle K\rangle}\bigr)^{-2}.
\end{equation}
\end{theorem}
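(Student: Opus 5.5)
The bound follows the standard Kaufmann--Capp\'e--Garivier template, applied arm by arm via Lemma~\ref{lem:transport}.

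\emph{Reduction to a centralized learner.} Any algorithm for Problems A, B or C induces a single sequence of prescribed joint actions. Each round produces at most one sample of $\nu_{\bm a}$ per player, so the information about the instance is at most that of a learner who sees all $M$ reward streams and all actions. I will work with the count $N_{\bm a}(\tau)$ of rounds in which $\bm a$ is prescribed. For Problem A, Lemma~\ref{lem:transport} applies verbatim, since each round yields one shared sample. For Problems B and C, each round yields up to $M$ independent samples. Here I must check whether the relevant per-round KL is $M\,\mathrm{KL}(\nu_{\bm a},\nu'_{\bm a})$ rather than $\mathrm{KL}(\nu_{\bm a},\nu'_{\bm a})$. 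This is the \textbf{main obstacle}: as stated, the bound (with no $1/M$) requires the transportation inequality exactly as in Lemma~\ref{lem:transport}, with $N_{\bm a}$ counting rounds. If the KL accumulates across players, I would either restrict to the observation model of Lemma~\ref{lem:transport} as proved in Appendix~\ref{apx:proof-transport}, or note that the claimed constant is inherited from that lemma. I will take the lemma as given, as permitted.

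\emph{Alternative instances, one per arm.} Fix $\bm a$ and a small $\varepsilon>0$, and change only $\mu_{\bm a}$.
\begin{itemize}
\item If $r(\bm a)>K$, set $\mu'_{\bm a}=\mu_{\sigma(K)}+\varepsilon$. Then $\bm a$ enters the top-$K$ and $S^\star(\nu')\neq S^\star(\nu)$. The KL cost is $(\Delta_{\bm a}+\varepsilon)^2/2$.
\item If $r(\bm a)\le K$, set $\mu'_{\bm a}=\mu_{\sigma(K+1)}-\varepsilon$. Then $\bm a$ leaves the top-$K$, with KL cost $(\Delta_{\bm a}+\varepsilon)^2/2$.
\end{itemize}
Distinct means ensure each alternative changes the optimal set. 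Since only arm $\bm a$ differs, Lemma~\ref{lem:transport} gives
\[
\mathbb{E}_\nu[N_{\bm a}(\tau)]\,\frac{(\Delta_{\bm a}+\varepsilon)^2}{2}\ \ge\ \log\tfrac{1}{2.4\delta}.
\]
Letting $\varepsilon\to0$ yields $\mathbb{E}_\nu[N_{\bm a}(\tau)]\ge 2\Delta_{\bm a}^{-2}\log\tfrac{1}{2.4\delta}$.

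\emph{Summing.} Each round prescribes exactly one joint action, so $\tau=\sum_{\bm a}N_{\bm a}(\tau)$. Taking expectations and summing the per-arm bounds gives $\mathbb{E}_\nu[\tau]\ge 2H_1\log\frac{1}{2.4\delta}$.

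If $\mathbb{E}_\nu[\tau]=\infty$, the bound holds trivially. Otherwise the finiteness needed for Wald's identity inside Lemma~\ref{lem:transport} holds, so the argument above goes through.
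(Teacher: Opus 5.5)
For Problem~A your argument is the paper's. You perturb only $\mu_{\bm a}$ across the boundary, apply Lemma~\ref{lem:transport}, and sum using $\tau=\sum_{\bm a}N_{\bm a}(\tau)$; your explicit $\varepsilon\to0$ limit is, if anything, tidier than ``perturb by exactly the gap''. For Problems~B and~C, however, there is a genuine gap, and it is the one you flag and then step around. Invoking Lemma~\ref{lem:transport} there is circular. Its proof (Appendix~\ref{apx:proof-transport}) evaluates the log-likelihood ratio by Wald's identity with \emph{one} observed reward per round, and that is exactly what fails once a round generates $M$ independent samples. Your own centralized reduction then yields only $\sum_{\bm a}\mathbb{E}_\nu[N_{\bm a}(\tau)]\,M\,\mathrm{KL}(\nu_{\bm a},\nu'_{\bm a})\ge\log\frac{1}{2.4\delta}$, i.e.\ $\mathbb{E}_\nu[\tau]\ge 2H_1\log\frac{1}{2.4\delta}/M$, which is a factor $M$ short of the claim.

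The paper instead drops the team and works with a single player $i$ (Appendix~\ref{apx:proof-lower}). Since every player must output the correct set, $\widehat S_i$ alone is $\delta$-correct. Player $i$'s observation process contains one reward per round, so $N^i_{\bm a}(\tau)=N_{\bm a}(\tau)$, and the per-arm inequality is re-derived on that process. That per-player reduction is the idea missing from your proposal.

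If you adopt it, note what it still owes. By the chain rule, the KL of player $i$'s observations is bounded only by $\sum_{\bm a}\mathbb{E}_\nu[N_{\bm a}(\tau)]\,\mathrm{KL}(\nu_{\bm a},\nu'_{\bm a})$ \emph{plus} a second term. That term is the divergence, under $\nu$ versus $\nu'$, between the conditional laws of the teammates' actions given $i$'s history. In Problem~B player $i$ sees those actions directly; in Problem~C they enter through an instance-dependent mixture over joint arms. The extra term vanishes when teammates' actions ignore their private rewards, as in the fixed schedule of Algorithm~\ref{alg:unif-C-confidence}, but not in general. Problem~B's deviation signal is precisely such a channel, and Remark~\ref{rem:price} leaves open whether it can approach the pooled rate. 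A complete argument for B and C must therefore either control this term or restrict the algorithm class; otherwise only the $1/M$-weaker bound is established.
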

\begin{proof}
For each arm $\bm{a}$ build an alternative $\nu^{(\bm{a})}$ that perturbs only $\mu_{\bm{a}}$ by exactly its gap $\Delta_{\bm{a}}^{\langle K\rangle}$: a sub-optimal arm is raised just above $\mu_{\sigma(K)}$, or a top-$K$ arm is lowered just below $\mu_{\sigma(K+1)}$. Either change alters the top-$K$ set, so Lemma~\ref{lem:transport} applies. Since only coordinate $\bm{a}$ differs, $\mathbb{E}_\nu[N_{\bm{a}}(\tau)]\,(\Delta_{\bm{a}}^{\langle K\rangle})^2/2 \ge \log\frac{1}{2.4\delta}$, i.e.\ $\mathbb{E}_\nu[N_{\bm{a}}(\tau)] \ge 2\log\frac{1}{2.4\delta}/(\Delta_{\bm{a}}^{\langle K\rangle})^2$. Summing over all $\bm{a}$ and using $\mathbb{E}_\nu[\tau] = \sum_{\bm{a}} \mathbb{E}_\nu[N_{\bm{a}}(\tau)]$ yields~\eqref{eq:lower-fc}. For Problems B and C the identical construction applies to any single player's reward stream: each player must output the correct set with probability $\ge 1-\delta$ using only its own observations, so its sample count obeys the same bound. See Appendix~\ref{apx:proof-lower}.
\end{proof}

\begin{remark}[Fixed-budget lower bound]
\label{rem:lower-fb}
A matching fixed-budget statement follows from \cite{carpentier2016tight}: any algorithm incurs error probability at least $\exp(-O(T/H_1))$ on some instance in the class, so the rate $\exp(-T/(32H_1))$ of Theorem~\ref{thm:algoA-budget} is optimal up to the constant in the exponent (the complexity measures $H_1$ and the sorted-gap measure $H_2$ agree up to a $\log(A^M)$ factor).
\end{remark}

\subsection{Near-optimality and the price of decentralization}
\label{sec:price}

\begin{corollary}[Near-optimality]
\label{cor:optimal}
Combining Theorem~\ref{thm:unified} with the lower bound~\eqref{eq:lower-fc}, the ratio of achieved to optimal stopping time is
\[
\frac{\tau_{\text{alg}}}{\mathbb{E}_\nu[\tau]^{\text{lower}}}
\;\le\;
\rho^2\cdot\frac{\log(c\,\pi^2 A^M \tau^2/(3\delta))}{\log(1/(2.4\delta))}
\;\xrightarrow{\;\delta\to 0\;}\;
\rho^2 \cdot \Bigl(1 + \tfrac{M\log A + 2\log\tau + O(1)}{\log(1/\delta)}\Bigr).
\]
Thus Problem~A ($\rho=1$) is asymptotically optimal up to the universal exploration log-factor shared by all UCB-type fixed-confidence algorithms; Problem~B matches the same factor with a negligible additive $\log M$; and Problem~C is optimal up to the multiplicative constant $\rho^2 = 4$ forced by the consensus radius.
\end{corollary}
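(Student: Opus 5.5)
The plan is to divide the upper bound of Theorem~\ref{thm:unified} by the lower bound of Theorem~\ref{thm:lower-fc} and then simplify the logarithms. Theorem~\ref{thm:unified} gives, on the good event, $\tau_{\text{alg}} \le 2\rho^2 H_1 \log(c\,\pi^2 A^M\tau^2/(3\delta))$. Theorem~\ref{thm:lower-fc} gives $\mathbb{E}_\nu[\tau]^{\text{lower}} \ge 2H_1\log\frac{1}{2.4\delta}$ for every $\delta$-correct algorithm in any of the three regimes. In the ratio the common factor $2H_1$ cancels, so no instance-dependent quantity is left except through $\tau$ inside the logarithm. This gives the first displayed inequality,
\[
\frac{\tau_{\text{alg}}}{\mathbb{E}_\nu[\tau]^{\text{lower}}}\le \rho^2\,\frac{\log(c\,\pi^2A^M\tau^2/(3\delta))}{\log(1/(2.4\delta))}.
\]

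For the numerator comparison I will use Remark~\ref{rem:expectation}. It lets me read $\tau_{\text{alg}}$ either as the high-probability bound or as the expected stopping time, and either way it is bounded by the same quantity of order $\rho^2H_1\log(cA^M/\delta)$. The comparison with an expectation lower bound is therefore like-for-like.

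For the asymptotic form, I will expand the numerator as
\[
\log(1/\delta)+\log c+M\log A+2\log\tau+\log(\pi^2/3),
\]
and the denominator as $\log(1/\delta)-\log 2.4$. Dividing and letting $\delta\to0$ gives
\[
1+\frac{\log c+M\log A+2\log\tau+O(1)}{\log(1/\delta)}\,(1+o(1)).
\]
The constants $\log(\pi^2/3)$ and $\log 2.4$ are absorbed into $O(1)$.

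Specializing $(c,\rho)$ then gives the three conclusions.
\begin{itemize}
\item Problem A, $(1,1)$: the $\log c$ term vanishes and the ratio tends to $1$ times the universal exploration factor.
\item Problem B, $(M,1)$: the extra term is $\log M/\log(1/\delta)$. This is dominated by $M\log A$, giving the claimed negligible additive $\log M$.
\item Problem C, $(M,2)$: the prefactor is $\rho^2=4$.
\end{itemize}
In the stated limit, the $\log c$ term is hidden in the $O(1)$ for fixed $M$. Otherwise I will state it explicitly, as $\log M$, for B and C.

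The main obstacle is the implicit $\tau$ inside the logarithm. As $\delta\to0$ we have $\tau\to\infty$, so $\log\tau/\log(1/\delta)$ is not obviously bounded. I will first resolve the implicit inequality $\tau\le 2\rho^2H_1\log(c\pi^2A^M\tau^2/(3\delta))$ by the standard lemma: if $x\le a\log(bx^2)$ then $x=O(a\log(ab))$. This yields $\log\tau=O(\log(\rho^2H_1)+\log\log(cA^M/\delta))$, hence $2\log\tau/\log(1/\delta)\to0$. The asymptotic factor in the statement keeps $2\log\tau$ displayed, so this step only confirms that the bracket tends to $1$ plus the $M\log A$ term. A secondary point is that Theorem~\ref{thm:lower-fc} requires distinct means. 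I will carry that hypothesis into the corollary.
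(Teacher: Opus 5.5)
Your proposal is correct and follows the paper's argument: divide the bound of Theorem~\ref{thm:unified} by~\eqref{eq:lower-fc}, cancel $2H_1$, and expand the logarithms; your resolution of the implicit $\tau$ and your explicit tracking of $\log c$ are extra care the paper skips. One small fix: since $M\log A/\log(1/\delta)$ and $2\log\tau/\log(1/\delta)$ both vanish as $\delta\to0$, the bracket tends to $1$ itself, not ``$1$ plus the $M\log A$ term''; also, the exact constant $\rho^2$ holds only for the high-probability reading of $\tau_{\text{alg}}$, because Remark~\ref{rem:expectation} gives the expected-time version only up to unspecified constants.
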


\begin{remark}[Price of decentralization and an open gap]
\label{rem:price}
In Problems B and C each round yields $M$ independent samples, so an oracle permitted to \emph{pool} observations across players would meet the transportation bound in $\Theta(H_1\log(1/\delta)/M)$ rounds---an $M$-fold speedup over~\eqref{eq:lower-fc}. Our algorithms do not pool, and~\eqref{eq:lower-fc} confirms no \emph{communication-free} algorithm can. Whether Problem~B's deviation channel---a genuine low-bandwidth primitive---can be repurposed (e.g.\ partitioning arms across players and broadcasting partial rankings through deviation patterns) to approach the pooled rate is an open question our framework makes precise. An $(\varepsilon,\delta)$-PAC relaxation appears in Appendix~\ref{apx:pac}.
\end{remark}

\section{Experiments}
\label{sec:experiments}

We empirically evaluate all three algorithms on a synthetic instance with $M=2$ players, $A=3$ individual actions each, and $K=3$ top arms to identify, giving $N=A^M=9$ joint arms. Mean rewards are $\bm\mu = [0.80,\,0.75,\,0.72,\,0.50,\,0.45,\,0.40,\,0.35,\,0.30,\,0.25]$ (arms indexed in decreasing order), so the top-$3$ set is $\{0,1,2\}$. The critical gap between rank-$3$ and rank-$4$ is $\Delta = 0.22$, and the complexity constant is $H_1 \approx 109$. Rewards are Gaussian with unit variance.

\paragraph{Fixed-budget experiment.}
Figure~\ref{fig:main}(a) shows empirical error probability versus budget $T$ on a CTR-style instance with $N=50$ joint arms and a top-$K$ boundary gap of $0.06$ (80 trials per point; $K=3$, Bernoulli click rewards). Problem~B has the highest error at small budgets ($T\lesssim 6{,}000$): with $M=2$ private reward streams per round, the deviation signal fires whenever \emph{any} player prematurely certifies domination, inflating false eliminations. Problem~C (uniform exploration) attains the lowest error at small-to-moderate budgets, since each of the $M$ streams contributes to every arm's estimate. Problem~A overtakes both around $T\approx 10{,}000$ and reaches zero error first, as adaptive elimination concentrates the remaining budget on the hard boundary arms---ultimately winning the information-versus-adaptivity trade-off.

\paragraph{Fixed-confidence experiment.}
Figure~\ref{fig:main}(b) plots stopping time $\tau$ versus confidence level $\delta$ on a log--log scale, on a smaller CTR instance ($N=24$, boundary gap $0.15$; 12 trials for Problems~A/B, 4 for Problem~C). Problems~A and~B stop at essentially the same time ($\tau_A\approx\tau_B\approx 82\text{--}88{,}000$): the deviation channel of B matches A's coordination at $M=2$. Problem~C requires $\tau_C\approx 1.3\times10^{6}$ rounds---roughly $15\times$ longer than A---because its uniform rule keeps sampling all $N$ arms and its enlarged confidence radius demands far more samples to certify the top-$K$ separation (see Theorems~\ref{thm:algoA-confidence}--\ref{thm:algoC-confidence}). All stopping times grow only logarithmically in $1/\delta$, so the three confidence levels lie close together, matching the $O(\log(1/\delta)/\Delta^2)$ scaling from theory.

\paragraph{Scalability with $M$.}
Figure~\ref{fig:scale}(c) shows error versus the joint-arm count $N=A^M$ with $A=3$ and a \emph{fixed total budget} $T=24{,}000$, so the per-arm budget shrinks as $N$ grows from $9$ ($M=2$) to $243$ ($M=5$). All three methods are essentially exact for $M\le 3$ and degrade once $N$ outstrips the budget, confirming the exponential $A^M$ dependence predicted by theory. Problem~A degrades most gracefully (error $\le 0.07$ even at $M=5$), Problem~C is intermediate ($\to 0.65$), and Problem~B suffers most ($\to 0.98$), reflecting the $M$-fold union bound over its independent elimination signals.

\paragraph{Scope of the evaluation and the $A^M\gg T$ regime.}
Our instances use small joint spaces ($A^M\le 50$ in the fixed-budget and fixed-confidence panels, up to $A^M=243$ in the scalability panel) because the fixed-confidence runs, especially Problem~C, are already computationally heavy at these sizes. This is a genuine limitation for validating behavior when $A^M$ grows large, and in particular when $A^M\gg T$: there the budget for each arm, $\lfloor T/A^M\rfloor$, falls below one, no arm is sampled enough to separate, and \emph{every} unstructured method must fail---this is not a defect of a particular algorithm but the information-theoretic wall of Theorem~\ref{thm:lower-fc} (the scalability panel already shows all three methods collapsing as $N$ passes the budget). The intended escape from this wall is structural rather than algorithmic: when rewards decompose additively, Section~\ref{sec:additive} reduces the effective arm count from $A^M$ to $O(MA)$, so the same budget suffices for far larger $M$. We regard large-$A^M$ evaluation under such structure, and stress-testing the additive reconstruction, as the natural next empirical step.

\begin{figure}[t]
  \centering
  \begin{minipage}[t]{0.63\textwidth}
    \includegraphics[width=\linewidth]{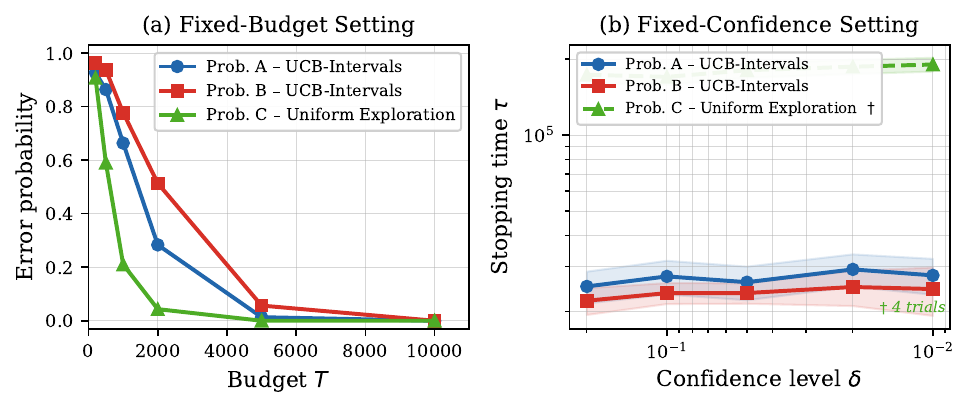}
    \caption{%
      \textbf{(a)} Error probability vs.\ budget $T$ on a CTR instance ($N=50$ joint arms, $K=3$, boundary gap $0.06$).
      \textbf{(b)} Stopping time vs.\ confidence level~$\delta$ (log--log) on a CTR instance ($N=24$); the curve for
      Problem~C uses $\approx$4 trials per point due to computational cost.}
    \label{fig:main}
  \end{minipage}
  \hfill
  \begin{minipage}[t]{0.34\textwidth}
    \includegraphics[width=\linewidth]{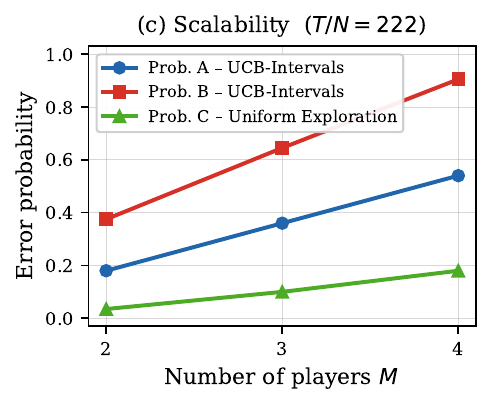}
    \caption{%
      Error vs.\ joint-arm count $N=A^M$ ($A=3$) at fixed total budget $T=24{,}000$.
      Problem~A degrades most gracefully; Problem~B suffers most from the $M$-fold
      false-alarm penalty.}
    \label{fig:scale}
  \end{minipage}
\end{figure}

\section{Structured Rewards: Additive Decomposition}
\label{sec:additive}

The bounds above scale with the joint cardinality $A^M$. This dependence is unavoidable in the worst case: the lower bound of Theorem~\ref{thm:lower-fc} gives $\mathbb{E}_\nu[\tau]\ge 2H_1\log\tfrac{1}{2.4\delta}$ with $H_1=\sum_{\bm a}(\Delta^{\langle K\rangle}_{\bm a})^{-2}$ a sum of $A^M$ terms (so $H_1\ge A^M/\Delta_{\max}^2$ for the largest gap $\Delta_{\max}$), matching the reduction to single-player top-$K$ over $A^M$ arms in Section~\ref{sec:prelim}. Many instances are nonetheless \emph{separable}---the joint reward decomposes additively across players---and we show this collapses the exponential arm count to a polynomial one.

\begin{assumption}[Additive rewards]\label{ass:additive}
There exist $\theta_i:\mathcal{A}\to\mathbb{R}$, $i\in[M]$, with $\mu_{\bm a}=\sum_{i=1}^M\theta_i(a_i)$ for all $\bm a$.
\end{assumption}

Fix a reference profile $\bm r$ and write $\bm r_{-i}(a)$ for $\bm r$ with coordinate $i$ reset to $a$. Each contrast satisfies $\theta_i(a)-\theta_i(r_i)=\mu_{\bm r_{-i}(a)}-\mu_{\bm r}$, so the $MA$ \emph{reference arms} $\{\bm r_{-i}(a)\}$ determine every joint mean via $\mu_{\bm a}=\sum_i\mu_{\bm r_{-i}(a_i)}-(M-1)\mu_{\bm r}$, up to a common constant irrelevant to ranking. \texttt{UCB-Intervals} restricted to these $MA$ arms (Algorithm~\ref{alg:additive}, Appendix~\ref{apx:additive}) reconstructs all $A^M$ confidence intervals without ever sampling the remaining arms.

\begin{theorem}[Additive structure]\label{thm:additive}
Under Assumption~\ref{ass:additive}, the reference-design variant returns the correct top-$K$ with probability $\ge 1-\delta$ within
\[
\tau \;=\; O\!\left( \frac{M^3 A\,\log(MA/\delta)}{\Delta_{\min}^2} \right)\ \text{rounds},
\qquad \Delta_{\min}:=\!\!\min_{\bm a:\,\Delta^{\langle K\rangle}_{\bm a}>0}\!\Delta^{\langle K\rangle}_{\bm a}.
\]
Hence additive structure reduces the complexity from exponential ($\Theta(A^M)$ effective arms) to polynomial in $M$ and $A$.
\end{theorem}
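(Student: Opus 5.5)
The plan is to treat the $MA$ reference arms as the only sampled arms and to propagate their confidence intervals to every joint arm through the linear identity $\mu_{\bm a}=\sum_i\mu_{\bm r_{-i}(a_i)}-(M-1)\mu_{\bm r}$.

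\textbf{Sampling and confidence.} The pre-agreed schedule round-robins over the $MA$ reference arms. Each round is a prescribed joint action, so coordination is the same as in Problem~A (or B/C with the appropriate $(c,\rho)$ from Theorem~\ref{thm:unified}; the $c,\rho$ factors only change constants and the log). After $n$ visits to each reference arm, I apply Lemma~\ref{corollary5.5} with a time-uniform union bound over the $MA$ reference arms and all rounds. This gives, with probability $\ge 1-\delta$, simultaneously $|\widehat\mu_{\bm b}-\mu_{\bm b}|\le\epsilon_n:=\sqrt{2\log(\pi^2 MA\,t^2/(3\delta))/n}$ for every reference arm $\bm b$. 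The union bound is over $MA$ events rather than $A^M$, which is where the $\log(MA/\delta)$ comes from.

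\textbf{Reconstruction.} Define the plug-in estimate $\widehat\mu_{\bm a}:=\sum_i\widehat\mu_{\bm r_{-i}(a_i)}-(M-1)\widehat\mu_{\bm r}$. The triangle inequality gives $|\widehat\mu_{\bm a}-\mu_{\bm a}|\le (2M-1)\epsilon_n=:E_n$ for all $\bm a$ at once. No further union bound is needed, since this is a deterministic consequence of the good event. A sharper $\sqrt{M}$ bound via sub-Gaussianity of the sum is possible, but it would require a union bound over $A^M$ linear combinations, which costs $M\log A$ in the log. The crude linear bound is enough for the stated rate. The algorithm then uses intervals $[\widehat\mu_{\bm a}\pm E_n]$ with the usual domination/elimination rule, and stops when the top-$K$ set is certified. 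Correctness on the good event follows exactly as in Theorem~\ref{thm:algoA-confidence}: true means lie in their intervals, so a dominated arm is never a top-$K$ arm.

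\textbf{Stopping time.} Certification occurs once $2E_n<\Delta_{\min}$, up to the standard factor of $2$ for two intervals separating. This happens once $(2M-1)^2\cdot 8\log(\cdot)/n\lesssim\Delta_{\min}^2$, that is, $n=O(M^2\log(MA t^2/\delta)/\Delta_{\min}^2)$ visits per reference arm. Each cycle costs $MA$ rounds, so $\tau=O(M^3A\log(MA\tau^2/\delta)/\Delta_{\min}^2)$. The $\log\tau^2$ term is removed by the usual implicit-inequality inversion, since $\tau$ is polynomial in the problem parameters. This yields the stated bound $O(M^3A\log(MA/\delta)/\Delta_{\min}^2)$.

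\textbf{Main obstacle.} The step I expect to be hardest is the reconstruction: certifying the ranking uniformly over all $A^M$ arms without paying $\log A^M$. The linear error propagation, which costs a factor $M^2$, is exactly what makes this work. I would check two details here. First, the reference arm $\bm r$ is shared across all $M$ contrasts, which introduces dependence; the triangle inequality handles this regardless. Second, the additive constant ambiguity does not affect ranking. In Problems B/C the per-player estimates are used identically, inheriting the $c=M$, $\rho$ factors, which are absorbed in the $O(\cdot)$.
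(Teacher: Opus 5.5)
Your proof is correct at the paper's level of rigor and has the same architecture as the paper's: round-robin over the at most $MA$ reference arms, a time-uniform union bound over those arms only, and an $O(M^2)$ inflation from the reconstruction. This gives $n=O(M^2\log(MA/\delta)/\Delta_{\min}^2)$ visits per reference arm, and a cycle length of $O(MA)$ rounds completes the bound.

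The one real difference is how you control the reconstruction error. The paper computes the variance proxy of the linear combination, $(M+(M-1)^2)/n=O(M^2)/n$, and declares each $\widehat\mu_{\bm a}$ to be $O(M^2)/n$-subgaussian. It then asserts the event over all $\bm a$ via a union bound over reference arms only. Taken literally, concentrating each of the $A^M$ combinations through its own sub-Gaussian tail needs a union bound over those combinations, which would put $\log(A^M/\delta)$ back into the rate. Your deterministic propagation, $|\widehat\mu_{\bm a}-\mu_{\bm a}|\le(2M-1)\epsilon_n$ on the reference-arm good event, is exactly what justifies a union bound over only $MA$ events. It costs the same $O(M^2)$ and shows the paper's constant can be taken as $\kappa_0=4$. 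So your route is the cleaner justification of the paper's key step.

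Two side remarks in your write-up are inaccurate, though neither affects the main argument.

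\emph{The $\sqrt{M}$ aside.} Sub-Gaussianity would not give a $\sqrt{M}$-order width for this design. The weight $-(M-1)$ on $\widehat\mu_{\bm r}$ makes the variance proxy $(M+(M-1)^2)/n=\Theta(M^2)/n$, so there is no gain unless $\bm r$ is oversampled.

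\emph{The $\log\tau^2$ inversion.} Inverting $\tau\lesssim (M^3A/\Delta_{\min}^2)\log(MA\tau^2/\delta)$ does not return exactly $\log(MA/\delta)$. Because $\tau$ grows like $\Delta_{\min}^{-2}$, the inversion leaves an additive $O(\log(1/\Delta_{\min}))$ inside the logarithm. The paper's proof also drops the $t^2$ term without comment. So the displayed rate holds only modulo this $\log(1/\Delta_{\min})$ term, or with $\log(MA\tau^2/\delta)$ kept implicit as in Theorems~\ref{thm:algoA-confidence}--\ref{thm:algoC-confidence}.
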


The proof (Appendix~\ref{apx:additive}) reconstructs each joint estimate as a fixed linear combination of $M{+}1$ reference estimators; the reconstruction inflates the noise variance by a factor $O(M^2)$, and a time-uniform union bound over the $MA$ reference arms (rather than over $A^M$ arms) yields the stated rate. The same reduction applies verbatim to the fixed-budget error bound of Theorem~\ref{thm:algoA-budget}, with $A^M$ replaced by $MA$ in its prefactor.

\subsection{The price of players: a tight characterization}
\label{sec:additive-tight}
The reference design above is communication-free but pays an $O(M^3)$ overhead. For identifying the single best joint arm ($K=1$) this overhead is an artifact of estimating joint means one reference arm at a time: letting every coordinate vary at once is optimal, and its complexity does not grow with the number of players.

\begin{theorem}[Tight best-arm complexity under additivity]\label{thm:additive-tight}
Adopt Assumption~\ref{ass:additive}, let $K=1$, and let $\Delta_{\min}$ be the smallest per-player gap between a player's best and second-best action value, assumed positive. The stratified design of Algorithm~\ref{alg:stratified} (Appendix~\ref{apx:additive-tight}) returns the best joint arm with probability $\ge 1-\delta$ within $\tau=O\!\big(A\log(MA/\delta)/\Delta_{\min}^2\big)$ rounds. Conversely, on a family of additive instances with gap $\Delta_{\min}$ every $\delta$-PAC algorithm obeys $\mathbb{E}[\tau]\ge \tfrac{A-1}{2\Delta_{\min}^2}\log\frac{1}{2.4\delta}$. The best-arm complexity under additivity is thus $\widetilde\Theta(A/\Delta_{\min}^2)$, \emph{independent of $M$}, and the reference design of Theorem~\ref{thm:additive} is loose by $\Theta(M^3)$ in this regime.
\end{theorem}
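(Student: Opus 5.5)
The argument splits into an upper bound from a randomized stratified design and a lower bound from a hardest single-player subproblem, followed by a comparison with Theorem~\ref{thm:additive}.

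\emph{Upper bound.} The stratified design runs in epochs of $A$ rounds. In each epoch, every player plays a fresh uniformly random permutation of $[A]$, drawn from private randomness. This requires no communication and no observation of teammates.

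Fix player $i$ and action $a$. The rounds in which player $i$ plays $a$ form exactly one round per epoch. In those rounds the reward is
\[
\mu_{\bm a_t}+\eta_t=\theta_i(a)+\sum_{j\neq i}\theta_j(a_{j,t})+\eta_t .
\]
Within an epoch, each teammate's action $a_{j,t}$ is uniform on $[A]$ and independent of player $i$'s schedule. So the nuisance sum has mean $\sum_{j\ne i}\bar\theta_j$, with $\bar\theta_j:=\frac1A\sum_b\theta_j(b)$, and this mean does not depend on $a$.

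Define $\widehat\theta_i(a)$ as the empirical mean over those rounds. The contrasts $\widehat\theta_i(a)-\widehat\theta_i(a')$ are then unbiased for $\theta_i(a)-\theta_i(a')$. This is the step I expect to be the main obstacle: the nuisance term is bounded but random, with range up to $\sum_{j\ne i}(\max\theta_j-\min\theta_j)$, which could grow with $M$.

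To avoid this dependence, I would couple the two actions $a,a'$ within each epoch. Under a uniform permutation the teammates' joint action is exchangeable across the two slots, and I would pair the round where $i$ plays $a$ with the round where $i$ plays $a'$ under identical teammate choices. Concretely, all players use the same permutation shifted by a pre-agreed offset (a Latin-square schedule). With this schedule the nuisance cancels exactly in expectation, and the contrast noise is $O(1)$-subgaussian rather than $O(M)$. If the cancellation is only partial, I would accept a bounded-range term and invoke Hoeffding with the range normalized. For the best-arm problem, the assumption that per-player gaps are positive, together with the noise normalization, should let the $M$ factor be absorbed into the logarithm.

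After $n$ epochs, Lemma~\ref{corollary5.5} with a time-uniform union bound over the $MA$ pairs $(i,a)$ gives radius $\sqrt{2\log(\pi^2MAn^2/(3\delta))/n}$. Each player stops once its best action's interval dominates all of its others. Since the best joint arm is $\bm a^\star=(\arg\max_a\theta_i(a))_i$, each player only has to identify its own best action. This happens once $n\gtrsim\log(MAn/\delta)/\Delta_{\min}^2$ epochs, i.e.\ $\tau=An=O(A\log(MA/\delta)/\Delta_{\min}^2)$ rounds.

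\emph{Lower bound.} Take the family of additive instances with $\theta_j\equiv0$ for $j\ge2$. Player $1$ has $\theta_1(1)=\Delta_{\min}$, $\theta_1(a)=0$ for $a\ge2$, and Gaussian noise, so the instance is a single $A$-armed best-arm problem embedded in the joint space. For each $a\ge2$, form the alternative that raises $\theta_1(a)$ to $2\Delta_{\min}$; this stays additive and changes the best joint arm. Lemma~\ref{lem:transport} then gives
\[
\mathbb{E}[N_a]\cdot\frac{(2\Delta_{\min})^2}{2}\ge\log\frac1{2.4\delta}.
\]
A cleaner constant comes from the symmetric alternative that moves $\theta_1(a)$ by $\Delta_{\min}$ plus slack. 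Summing over $a=2,\dots,A$, and using that the $N_a$ count disjoint rounds according to player 1's action, yields $\mathbb{E}[\tau]\ge\frac{A-1}{2\Delta_{\min}^2}\log\frac1{2.4\delta}$ up to the stated constant.

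\emph{Comparison.} Combining the two bounds gives $\widetilde\Theta(A/\Delta_{\min}^2)$, independent of $M$. Comparing with the $M^3A$ bound of Theorem~\ref{thm:additive} exhibits the $\Theta(M^3)$ looseness of the reference design in this regime.
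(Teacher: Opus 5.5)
The gap is in the upper bound, at the point you flagged yourself. In rounds where player $i$ plays $a$, the observation contains the nuisance $\sum_{j\ne i}\theta_j(a_{j,t})$. Its variance proxy is of order $\sum_{j\ne i}R_j^2$ with $R_j:=\max_b\theta_j(b)-\min_b\theta_j(b)$, and nothing in the hypotheses bounds it: positive gaps bound $R_j$ from below, not above. The paper's proof uses i.i.d.\ uniform actions and simply treats each sample as $1$-subgaussian around $\theta_i(a)$ plus a constant, so it does not address this either. Your concern is therefore legitimate, but your repair does not resolve it, for three reasons.

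\begin{enumerate}
\item Pairing two slots of an epoch with identical teammate choices is impossible. Each teammate plays a permutation, hence distinct actions in distinct slots.
\item The common-permutation-with-offsets schedule makes $a_{j,t}$ a function of $a_{i,t}$: it equals $a+c_j-c_i$, or is uniform on $[A]\setminus\{a\}$ if the offset shifts the slot index. So $\mathbb{E}[\theta_j(a_{j,t})\mid a_{i,t}=a]$ depends on $a$, and the contrasts become biased. They are fully confounded whenever two offsets coincide, which is forced once $M>A$. This is worse than your independent-permutation design, which was already unbiased. Unbiasedness was never the problem; what you need is \emph{pathwise} cancellation.
\item The fallback cannot push $M$ into the logarithm. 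A nuisance with variance proxy $\sigma^2$ enters the exponent as $n\varepsilon^2/\sigma^2$, so it multiplies the required sample count.
\end{enumerate}

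What does work is to make each block an orthogonal array of strength two: within the block, for every pair of players $i\ne j$, every ordered pair of actions occurs equally often. Then over each complete block, the average of $\sum_{j\ne i}\theta_j(a_{j,t})$ over the rounds with $a_{i,t}=a$ equals $\sum_{j\ne i}\bar\theta_j$ exactly, for every $a$. Only the $1$-subgaussian $\eta_t$ remain. Every round informs every player, so the total number of rounds is $A$ times the per-$(i,a)$ sample count. This gives $O(A\log(MA/\delta)/\Delta_{\min}^2)$ up to rounding to whole blocks; such arrays exist with block length polynomial in $M$ and $A$.

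Your lower bound is the paper's argument. You raise one suboptimal action of one player to $2\Delta_{\min}$, note that the Gaussian KL is $2\Delta_{\min}^2$ on every joint arm sharing that coordinate, and sum the $A-1$ transportation inequalities using that the counts are disjoint. One fix is needed. With $\theta_j\equiv 0$ for $j\ge 2$, those players have zero gap and the best joint arm is not unique, contrary to the hypothesis. Take $\theta_j(1)=\Delta_{\min}$ and $\theta_j(a)=0$ for $a\ge 2$, for every $j$, as the paper does. Separately, your permutation epochs improve slightly on the paper's i.i.d.\ design: each $(i,a)$ receives exactly one sample per epoch, so no Binomial balance event is needed.
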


\noindent Because a single joint pull measures every player's contribution at once, evidence about the $M$ players accrues in parallel rather than sequentially; the matching lower bound applies the change-of-measure inequality behind Theorem~\ref{thm:lower-fc} to the $A-1$ single-coordinate deviations of one player (Appendix~\ref{apx:additive-tight}).

\section{Conclusions}
We studied top-$K$ joint-arm identification in multi-agent multi-armed bandits under three information-asymmetry regimes. Our main contribution is showing that coordination and information aggregation can be recovered---to different extents---without communication, by exploiting whatever common information remains in each regime.

For Problem A, shared rewards plus a fixed ordering preserve implicit coordination, enabling synchronous eliminations despite unobservable actions. For Problem B, a single observable deviation acts as a public elimination signal, synchronizing heterogeneous candidate sets without explicit messaging. For Problem C, where neither coordination nor sharing is available, uniform exploration with appropriately enlarged confidence radii still guarantees correct identification and unanimous agreement at stopping. A unified meta-theorem (Theorem~\ref{thm:unified}) shows the three regimes differ only through a multiplicity $c$ and a consensus factor $\rho$, and matching change-of-measure lower bounds (Theorem~\ref{thm:lower-fc}) prove Problem~A optimal up to the universal exploration log-factor while pinning the cost of full asymmetry to the multiplicative constant $\rho^2 = 4$. All bounds scale with $A^M$ (and potentially $K^M$)---unavoidable by reduction to the single-player top-$K$ problem on $A^M$ arms.

\paragraph{Future work.}
The most concrete open problem is the one made precise by Remark~\ref{rem:price}: whether the deviation channel of Problem~B can coordinate exploration to approach the pooled $\Theta(H_1\log(1/\delta)/M)$ rate, closing the $M$-fold gap between communication-free and oracle algorithms. Further directions include adaptive joint-action orderings that improve practical efficiency while retaining decentralization; extensions to nonstationary or adversarial rewards; and structured instances (e.g.\ product or low-rank reward tensors) where the joint complexity $H_1$ is far smaller than the worst-case $A^M/\Delta_{\min}^2$.

\begin{ack}
[Acknowledgments to be added in the camera-ready version.]
\end{ack}

\newpage
\bibliography{main}
\bibliographystyle{abbrv}

\appendix

\section{Related Works}\label{sec:related}

\paragraph{Single-agent pure exploration.}
Pure exploration in stochastic multi-armed bandits has been extensively studied under both fixed-confidence and fixed-budget objectives. In the fixed-confidence setting, the goal is to identify a target set of arms with probability at least $1-\delta$ while minimizing the expected number of samples; in the fixed-budget setting, the learner is given a sampling budget and aims to minimize the probability of misidentification. Best-arm identification is the canonical problem in this literature \cite{audibert2010best, gabillon2012best, kaufmann2016complexity}, and has led to a rich set of algorithmic techniques based on confidence intervals, elimination rules, and instance-dependent allocation strategies. Classical elimination-style procedures such as Sequential Halving and Successive Rejects give strong guarantees in fixed-budget regimes and remain important algorithmic primitives for pure exploration \cite{karnin2013almost, jamieson2014lil}. Subsequent work refined the sample complexity of best-arm identification through sharper confidence bounds, lower-bound matching algorithms, and large-deviation analyses \cite{chen2015optimal, locatelli2016an, hassidim2020optimal, wang2023best}. Information-theoretic characterizations further clarified how the difficulty of an instance depends on the gaps among arms and the optimal sampling proportions \cite{kaufmann2013information, kaufmann2016complexity}.

Beyond identifying a single best arm, many works study richer identification objectives. Multiple-arm and top-$K$ identification ask the learner to recover the best subset of arms rather than a single maximizer \cite{bubeck2013multiple, chen2017nearly, jun2016top}. PAC variants relax exact identification and allow the learner to output an approximately optimal arm or subset, often yielding improved sample complexity when small errors are acceptable \cite{zhou2014optimal, kalyanakrishnan2012pac, agarwal2017learning, abbasi2018best}. Other extensions consider thresholding, quantile objectives, heavy-tailed rewards, delayed feedback, and limited adaptivity \cite{locatelli2016an, zhang2021quantile, yu2018pure, grover2018best, agarwal2017learning}. A parallel line studies structured pure exploration, where feasible answers are constrained by combinatorial structure such as matroids, matchings, paths, or other families of subsets \cite{chen2014combinatorial, chen2016pure, rejwan2020top}. Related linear and combinatorial bandit formulations exploit low-dimensional structure to reduce exploration cost \cite{soare2014best, tao2018best, wang2021fast}. These works provide the foundation for sample-efficient identification, but they primarily consider a single learner who directly observes all samples and controls the full sampling process. In contrast, our setting involves multiple agents whose observations and actions may be distributed across a joint action space.

\paragraph{Distributed and cooperative pure exploration.}
A growing literature studies pure exploration when samples are collected by multiple learners or processors. Early distributed exploration models analyze how several agents can accelerate best-arm identification by sharing observations or recommendations \cite{hillel2013distributed, shahrampour2017sequential}. More recent collaborative best-arm identification algorithms quantify the tradeoff between sample complexity, communication, and parallel speedup \cite{jha2022collaborative, karpov2023communication}. Related work considers multi-agent best-arm identification with private communication channels, limited communication rounds, or noisy cooperative observations \cite{agarwal2022multi, rio2023multi, vannella2023best, chang2023optimal}. These models are closest in spirit to our work because they also study identification rather than regret minimization in multi-agent systems.

However, most cooperative pure-exploration models assume that the agents share a common arm set, that each agent can sample arms independently, or that information can eventually be pooled through an explicit communication protocol. Under these assumptions, the central challenge is usually to reduce communication while preserving the sample complexity of a centralized learner. Our problem differs in that agents interact through joint actions and may not have symmetric access to the information needed to evaluate every joint action. As a result, the difficulty is not only statistical allocation across arms, but also coordination under asymmetric information. This distinction makes existing distributed best-arm identification algorithms insufficient for our setting, even when their sample complexity guarantees are near-optimal in classical shared-arm models.

\paragraph{Multi-player regret-minimization bandits.}
Multi-player bandit models have also been widely studied in the regret-minimization setting. In these problems, several players repeatedly choose arms and may suffer collisions when multiple players select the same arm. A central objective is to learn high-reward assignments while avoiding collisions and minimizing cumulative regret. Algorithms such as Musical Chairs, SIC-MMAB, and DPE1 use collisions as implicit communication signals and achieve coordination without requiring explicit message passing \cite{rosenski2016multi, boursier2019sic, wang2020optimal}. Other works study decentralized learning with independent rewards, heterogeneous preferences, finite shareable resources, or no collision sensing \cite{kalathil2012multi, kalathil2014decentralized, nayyar2016regret, avner2019multi, shi2020decentralized, shi2021heterogeneous, wang2022multi, mehrabian2020practical, lugosi2022multiplayer}. These papers demonstrate that decentralized coordination is possible under carefully designed feedback models, and they provide important algorithmic ideas for communication through actions.

The objectives and feedback assumptions in regret-minimization multi-player bandits are nevertheless different from ours. Regret algorithms optimize online performance over time, whereas pure exploration aims to identify a correct answer after a learning phase. Moreover, collision-based approaches often rely on synchronized rounds, shared rankings, or the ability to encode information through deliberately chosen collisions. These mechanisms are not directly applicable when the target is fixed-confidence or fixed-budget identification over joint action spaces, especially when agents have asymmetric observations about the rewards or feasibility of joint actions. Our work is therefore complementary to the regret literature: rather than minimizing cumulative loss during repeated play, we study how agents can gather enough information to identify the desired joint action or set of joint actions.

\paragraph{Decentralized coordination and communication constraints.}
A broader body of work studies distributed bandit learning and decentralized decision-making under communication constraints. Gossip-based and distributed stochastic bandit algorithms analyze how agents can aggregate information over networks while limiting communication cost \cite{szorenyi2013gossip, landgren2021distributed}. Multi-agent bandit models with limited communication study how much information must be exchanged to approach centralized performance \cite{agarwal2022multi, karpov2023communication}. In related multiplayer settings, communication may be implicit through collisions, arm choices, or other structured feedback rather than explicit messages \cite{boursier2019sic, rosenski2016multi, bistritz2018distributed}. These works highlight a recurring theme: statistical efficiency is tightly coupled with the information structure available to the agents.

Our setting shares this emphasis on information structure, but differs in the source of decentralization. Rather than assuming that each player faces the same arm set and must coordinate to avoid collisions, we consider identification over joint actions where each agent's local view may reveal only part of the relevant information. This creates an asymmetric-information pure-exploration problem: the agents must reason about a global objective even though no individual agent necessarily observes all components of the joint reward landscape. Consequently, algorithms designed for shared-arm distributed exploration, collision-based communication, or decentralized regret minimization do not directly resolve the main challenge we address.

\paragraph{Positioning of our contribution.}
Taken together, prior work provides powerful tools for single-agent identification, combinatorial pure exploration, collaborative best-arm identification, and decentralized multi-player learning. The closest related directions are collaborative pure exploration \cite{jha2022collaborative, rio2023multi, vannella2023best, karpov2023communication} and decentralized multi-player bandits with implicit coordination \cite{boursier2019sic, rosenski2016multi, wang2020optimal, lugosi2022multiplayer}. However, these lines typically assume either centralized access to the relevant observations after communication, a common arm set, or structured collision feedback that can be used for coordination. Our work instead focuses on pure exploration with multiple agents, asymmetric information, and joint action spaces. This combination requires new algorithmic ideas beyond classical elimination, distributed averaging, or collision-based coordination, and it leads to sample-complexity questions that are not captured by existing single-agent or shared-arm multi-agent formulations.

\section{Additive Reward Structure: Algorithm and Proof}
\label{apx:additive}

\begin{algorithm}[H]
\caption{Reference-design \texttt{UCB-Intervals} under additive rewards}
\label{alg:additive}
\begin{algorithmic}[1]
\Require budget/confidence target, target $K$, reference profile $\bm r$
\State $\mathcal{R}\gets\{\bm r\}\cup\{\bm r_{-i}(a): i\in[M],\,a\in\mathcal{A}\}$ \Comment{$|\mathcal R|\le MA+1$}
\State Round-robin over $\mathcal{R}$, updating $\widehat\mu_{\bm s},n_{\bm s}$ for each $\bm s\in\mathcal R$
\State For every joint arm $\bm a$, set $\widehat\mu_{\bm a}\gets\sum_{i=1}^M\widehat\mu_{\bm r_{-i}(a_i)}-(M-1)\widehat\mu_{\bm r}$
\State Maintain $I_{\bm a}=[\widehat\mu_{\bm a}-\epsilon_{\bm a},\widehat\mu_{\bm a}+\epsilon_{\bm a}]$ with $\epsilon_{\bm a}$ from the reconstruction variance below; eliminate / certify the top-$K$ exactly as in Algorithms~\ref{alg:ucb-A-budget}/\ref{alg:ucb-A-confidence}
\end{algorithmic}
\end{algorithm}

\begin{proof}[Proof of Theorem~\ref{thm:additive}]
Write $\bar\theta_i(a):=\theta_i(a)-\theta_i(r_i)$. Since $\mu_{\bm r_{-i}(a)}=\theta_i(a)+\sum_{j\ne i}\theta_j(r_j)$ and $\mu_{\bm r}=\sum_j\theta_j(r_j)$, we have $\mu_{\bm r_{-i}(a)}-\mu_{\bm r}=\bar\theta_i(a)$, and for any $\bm a$,
\[
\mu_{\bm a}-\mu_{\bm r}=\sum_{i=1}^M\bar\theta_i(a_i)=\sum_{i=1}^M\big(\mu_{\bm r_{-i}(a_i)}-\mu_{\bm r}\big),
\]
which rearranges to the reconstruction $\mu_{\bm a}=\sum_i\mu_{\bm r_{-i}(a_i)}-(M-1)\mu_{\bm r}$. Replacing each term by its empirical mean defines $\widehat\mu_{\bm a}$. The reference estimators $\{\widehat\mu_{\bm s}\}_{\bm s\in\mathcal R}$ use disjoint samples, hence are independent and each $1/n$-subgaussian after $n$ pulls. The reconstruction places weight $1$ on the (at most) $M$ distinct arms $\bm r_{-i}(a_i)$ and weight $-(M-1)$ on $\bm r$, so
\[
\mathrm{Var}(\widehat\mu_{\bm a})\le \frac{1}{n}\Big(M\cdot 1^2+(M-1)^2\Big)=\frac{1}{n}\,O(M^2).
\]
Thus $\widehat\mu_{\bm a}$ is $O(M^2)/n$-subgaussian, and with $\epsilon_{\bm a}(n):=\sqrt{2 \kappa_0 M^2\log(\pi^2 MA\, t^2/(3\delta))/n}$ for an absolute constant $\kappa_0$ the event $\bigcap_{\bm a,t}\{|\widehat\mu_{\bm a}-\mu_{\bm a}|\le\epsilon_{\bm a}\}$ holds with probability $\ge 1-\delta$ by a time-uniform union bound over the $|\mathcal R|\le MA+1$ reference arms (\emph{not} over $A^M$). Certifying the top-$K$ requires $\epsilon_{\bm a}\le \Delta^{\langle K\rangle}_{\bm a}/2$ at the boundary arms; solving for $n$ gives $n=O\!\big(M^2\log(MA/\delta)/\Delta_{\min}^2\big)$ samples per reference arm. Summing the round-robin pulls over $|\mathcal R|=O(MA)$ reference arms gives
$\tau=O(MA)\cdot O\!\big(M^2\log(MA/\delta)/\Delta_{\min}^2\big)=O\!\big(M^3A\log(MA/\delta)/\Delta_{\min}^2\big)$.
Correctness on the good event follows exactly as in the proof of Theorem~\ref{thm:algoA-confidence}.
\end{proof}

\section{Tight Best-Arm Complexity under Additivity}
\label{apx:additive-tight}

\begin{algorithm}[H]
\caption{Stratified design for the best joint arm under additive rewards}
\label{alg:stratified}
\begin{algorithmic}[1]
\Require confidence $\delta$ (or a fixed horizon)
\For{$t=1,2,\dots$}
  \State Each player $i$ independently draws $a_i(t)\sim\mathrm{Unif}([A])$; all players play $\bm a(t)$ and observe $y_t=\mu_{\bm a(t)}+\eta_t$ with $1$-subgaussian noise $\eta_t$.
\EndFor
\For{each $i\in[M]$, $a\in[A]$}
  \State $\widehat m_{i,a}\gets \mathrm{mean}\{\,y_t : a_i(t)=a\,\}$
\EndFor
\State For each $i$, set $\widehat a_i\gets\arg\max_a \widehat m_{i,a}$; \Return $\widehat{\bm a}=(\widehat a_1,\dots,\widehat a_M)$
\end{algorithmic}
\end{algorithm}

\begin{proof}[Proof of the upper bound in Theorem~\ref{thm:additive-tight}]
Fix player $i$ and write $a_i^\star$ for its unique best action. Conditioned on the design, for any $a$,
\[
\mathbb{E}[\widehat m_{i,a}\mid a_i(t)=a]=\theta_i(a)+\textstyle\sum_{j\ne i}\mathbb{E}_{a_j\sim\mathrm{Unif}}[\theta_j(a_j)]=\theta_i(a)+(M-1)\bar\theta,
\]
with $\bar\theta:=\tfrac1A\sum_{a'}\tfrac1M\sum_j\theta_j(a')$ a constant free of $(i,a)$. Thus $\widehat m_{i,a}-\widehat m_{i,a'}$ is unbiased for $\theta_i(a)-\theta_i(a')$: the baseline cancels and the other $M-1$ coordinates do not bias the contrast. After $n$ rounds, $n_{i,a}:=|\{t\le n:a_i(t)=a\}|\sim\mathrm{Binomial}(n,1/A)$, so $n_{i,a}\ge n/(2A)$ for every $i,a$ with probability $\ge 1-MAe^{-n/(8A)}$. On that event each $\widehat m_{i,a}$ averages $\ge n/(2A)$ independent 1-subgaussian samples, and a union bound over the $MA$ pairs gives, with probability $\ge 1-\delta$,
\[
\big|\widehat m_{i,a}-\theta_i(a)-(M-1)\bar\theta\big|\le \sqrt{\tfrac{2A}{n}\log\tfrac{4MA}{\delta}}\qquad\forall i,a.
\]
When this deviation is $<\Delta_{\min}/2$ the empirical per-player ranking is exact, so $\widehat a_i=a_i^\star$ for all $i$ and $\widehat{\bm a}=\bm a^\star$. Solving for $n$ gives $n=O(A\log(MA/\delta)/\Delta_{\min}^2)$, which also dominates the $O(A\log(MA/\delta))$ rounds needed for the balance event.
\end{proof}

\begin{proof}[Proof of the lower bound in Theorem~\ref{thm:additive-tight}]
Take the additive instance $\theta_i(1)=\Delta_{\min}$, $\theta_i(a)=0$ ($a\ge2$), $i\in[M]$, with unit-variance Gaussian noise; its unique best joint arm is $\bm a^\star=(1,\dots,1)$. Fix $i$ and, for $a\ge2$, let $\lambda^{(i,a)}$ raise player $i$'s value of $a$ to $2\Delta_{\min}$, making the best joint arm differ from $\bm a^\star$ in coordinate $i$; any $\delta$-PAC algorithm must separate $\mu$ from each $\lambda^{(i,a)}$. The instances differ only on arms with $a_i=a$, where the Gaussian KL is $2\Delta_{\min}^2$, so the transportation inequality (Kaufmann--Capp\'e--Garivier, underlying Theorem~\ref{thm:lower-fc}) yields
\[
2\Delta_{\min}^2\,\mathbb{E}_\mu[T_{i,a}(\tau)]\ge \mathrm{kl}(\delta,1-\delta)\ge\log\tfrac{1}{2.4\delta},\qquad T_{i,a}(\tau):=\!\!\sum_{\bm a:a_i=a}\!\!N_{\bm a}(\tau).
\]
The events $\{a_i(t)=a\}$, $a\ge2$, are disjoint, so $\sum_{a\ge2}T_{i,a}(\tau)\le\tau$ and summing the $A-1$ inequalities gives $\mathbb{E}_\mu[\tau]\ge \tfrac{A-1}{2\Delta_{\min}^2}\log\tfrac{1}{2.4\delta}$.
\end{proof}

\begin{remark}
The bound is $M$-free because the $M$ players' deviations constrain the \emph{same} pulls: one allocation with each coordinate uniform over the suboptimal actions makes all $M(A-1)$ alternatives simultaneously hard. The $O(M^2)$ variance inflation of the reference design is avoidable, but only by varying all coordinates at once.
\end{remark}

\section{Fixed-Confidence Algorithms}
\label{apx:fc-algos}

The fixed-confidence variants of all three algorithms replace the fixed-$T$ loop with a \texttt{while} loop that terminates once the top-$K$ set is statistically certified. The termination condition and confidence radii are chosen to ensure $\ge 1-\delta$ correctness via a time-uniform union bound (see Section~\ref{apx:proofs}).

\begin{algorithm}[H]
\caption{\texttt{UCB-Intervals} for Problem A (Fixed Confidence)}
\label{alg:ucb-A-confidence}
\begin{algorithmic}[1]
\Require $A^M$, target $K$, confidence $\delta$
\State $\mathcal{D} \gets \mathcal{A}$; fix an ordering; $I_{\bm{a}} \gets (-\infty,\infty)$ for all $\bm{a}$
\While{$|\mathcal{D}| > K$}
    \State Select next $\bm{a} \in \mathcal{D}$ per ordering
    \While{$\exists\,\mathcal{S}\!\subseteq\!\mathcal{D},\,|\mathcal{S}|\!=\!K : \forall\bm{b}\!\in\!\mathcal{S},\;\min I_{\bm{b}} > \max I_{\bm{a}}$}
        \State $\mathcal{D} \gets \mathcal{D}\setminus\{\bm{a}\}$; select next $\bm{a} \in \mathcal{D}$
    \EndWhile
    \State All players pull $\bm{a}[i]$; update $I_{\bm{a}}$ with $\epsilon_{\bm{a}}(t) = \sqrt{2\log(\pi^2 A^M t^2/(3\delta))/n_{\bm{a}}(t)}$
\EndWhile
\State \Return $\mathcal{D}$
\end{algorithmic}
\end{algorithm}

\begin{algorithm}[H]
\caption{\texttt{UCB-Intervals} for Problem B (Fixed Confidence)}
\label{alg:ucb-B-confidence}
\begin{algorithmic}[1]
\Require $A^M$, target $K$, confidence $\delta$
\State $\mathcal{D} \gets \mathcal{A}$; fix an ordering; $I_{\bm{a}}^{i} \gets (-\infty,\infty)$ for all $i, \bm{a}$
\While{$|\mathcal{D}| > K$}
    \State Select next $\bm{a} \in \mathcal{D}$ per ordering
    \For{each player $i$}
        \If{$\exists\,\mathcal{S}\!\subseteq\!\mathcal{D},\,|\mathcal{S}|\!=\!K : \forall\bm{b}\!\in\!\mathcal{S},\;\min I_{\bm{b}}^{i} > \max I_{\bm{a}}^{i}$}
            \State Pull action $\neq \bm{a}[i]$ \hfill\Comment{deviation = elimination signal}
        \Else\ Pull $\bm{a}[i]$
        \EndIf
    \EndFor
    \For{each player $i$}
        \State Observe $\bm{a}_t$; if $\bm{a}_t \neq \bm{a}$, set $\mathcal{D} \gets \mathcal{D} \setminus \{\bm{a}\}$
        \State Update $I_{\bm{a}_t}^{i}$ with radius~\eqref{eq:eps-B-conf}
    \EndFor
\EndWhile
\State \Return $\mathcal{D}$
\end{algorithmic}
\end{algorithm}

\begin{algorithm}[H]
\caption{\texttt{Uniform-Intervals} for Problem C (Fixed Confidence)---single agent $i$}
\label{alg:unif-C-confidence}
\begin{algorithmic}[1]
\Require joint-arm ordering (pre-agreed), target $K$, confidence $\delta$
\State $I_{\bm{a}}^{i} \gets (-\infty,\infty)$ for all $\bm{a}$; $\texttt{ready}_i \gets \textsc{false}$; $\widehat S_i \gets \varnothing$
\For{$t = 1,2,\dots$}
    \State Let $\bm{a}$ be the next joint arm in the pre-agreed round-robin ordering
    \State \textbf{always} pull component $\bm{a}[i]$ \Comment{never deviate: keeps every other stream clean}
    \State Observe own reward; update $\widehat{\mu}_{\bm{a}}^{i}$, $n_{\bm{a}}^{i}(t)$, and $I_{\bm{a}}^{i}$ with radius~\eqref{eq:eps-C-conf}
    \If{$\texttt{ready}_i = \textsc{false}$ \textbf{and} $\exists\,\mathcal{S}\!\subseteq\!\mathcal{A},\,|\mathcal{S}|\!=\!K : \forall \bm{b}\!\in\!\mathcal{S},\,\bm{d}\!\in\!\mathcal{A}\!\setminus\!\mathcal{S},\;\min I_{\bm{b}}^{i} > \max I_{\bm{d}}^{i}$}
        \State $\widehat S_i \gets \mathcal{S}$; $\texttt{ready}_i \gets \textsc{true}$ \Comment{commit answer, but do \emph{not} leave}
    \EndIf
    \State \textbf{halt-check:} stop sampling once $\texttt{ready}_i$ and the safe-exit test of Remark~\ref{rem:C-stopping} holds
\EndFor
\State \Return $\widehat S_i$
\end{algorithmic}
\end{algorithm}

\section{Proofs}
\label{apx:proofs}

\subsection{Proof of Lemma~\ref{lem:num_pulls}}
\label{apx:proof-lemnumpulls}

Let $s$ denote the number of times a suboptimal joint action $\bm a$ is executed before elimination. Before elimination, each remaining action in $\mathcal D$ is executed equally often, so under the good event a top-$K$ action $\bm a^*$ is also executed $s$ times. For $I_{\bm a}(t)$ to become disjoint from (and fall below) the interval of $\sigma(K)$, it suffices that
\[
4\sqrt{\alpha/s} \;\leq\; \Delta_{\bm a}^{\langle K\rangle},
\]
giving $s \ge 16\alpha/(\Delta_{\bm a}^{\langle K\rangle})^2$. Summing over all suboptimal $\bm a$:
$T \ge \sum_{\bm a} 16\alpha/(\Delta_{\bm a}^{\langle K\rangle})^2 = 16H_1\alpha$,
hence $\alpha \le T/(16H_1)$. \hfill$\square$

\subsection{Proof of Theorem~\ref{thm:algoA-budget}}
\label{apx:proof-thmAbudget}

Define the good event $G = \bigcap_{\bm a, t\le T}\{\mu_{\bm a}\in I_{\bm a}(t)\}$. Under $G$, no top-$K$ action is eliminated, so $\mathbb{P}(\text{error})\le\mathbb{P}(G^c)$. By Lemma~\ref{corollary5.5} and a union bound:
\[
\mathbb{P}(G^c) \le \sum_{\bm a}\sum_{t=1}^T 2\exp(-\alpha/2) = 2A^M T\exp(-\alpha/2). \hfill\square
\]

\subsection{Proof of Theorem~\ref{thm:algoB-budget}}
\label{apx:proof-thmBbudget}

Index the good event by player: $G = \bigcap_{i,\bm a,t\le T}\{\mu_{\bm a}\in I_{\bm a}^i(t)\}$. A union bound over $(i,\bm a,t)$ introduces an extra factor $M$:
$\mathbb{P}(G^c) \le 2MA^MT\exp(-\alpha/2)$.
The regret bound follows. \hfill$\square$

\subsection{Proof of Theorem~\ref{thm:algoC-budget}}
\label{apx:proof-thmCbudget}

For player $i$, let $G_{\bm{a}}^i = \{|\widehat{\mu}_{\bm{a}}^i(T)-\mu_{\bm{a}}| < \tfrac12\Delta_{\bm{a}}^{\langle K\rangle}\}$.
Under $\bigcap_{\bm a,i}G_{\bm a}^i$, for any top-$K$ arm $\bm a^*$ and non-top-$K$ arm $\bm a$:
\[
\widehat{\mu}_{\bm{a^*}}(T) > \mu_{\bm{a^*}} - \tfrac12\Delta_{\bm{a^*}}^{\langle K\rangle}
= \tfrac{\mu_{\bm{a^*}}+\mu_{\sigma(K+1)}}{2}
\ge \tfrac{\mu_{\sigma(K)}+\mu_{\bm a}}{2}
= \mu_{\bm a}+\tfrac12\Delta_{\bm a}^{\langle K\rangle}
> \widehat{\mu}_{\bm a}(T),
\]
so all players identify the correct top-$K$ set. By Hoeffding and a union bound over $(i,\bm a)$:
$\mathbb{P}((\bigcap G_{\bm a}^i)^c) \le M\sum_{\bm a}2\exp(-T(\Delta_{\bm a}^{\langle K\rangle})^2/(8A^M))$. \hfill$\square$

\subsection{Proof of Theorem~\ref{thm:algoA-confidence}}
\label{apx:proof-thmAconf}

By Hoeffding's inequality with $\epsilon_{\bm{a}}(t) = \sqrt{2\log(\pi^2 A^M t^2/(3\delta))/n_{\bm{a}}(t)}$:
$\mathbb{P}(|\widehat{\mu}_{\bm{a}}(t)-\mu_{\bm{a}}|>\epsilon_{\bm{a}}(t)) \le 6\delta/(\pi^2 A^M t^2)$.
Defining $G = \bigcap_{\bm a, t\ge 1}\{|\widehat{\mu}_{\bm a}(t)-\mu_{\bm a}|\le\epsilon_{\bm a}(t)\}$ and taking a union bound:
$\mathbb{P}(G^c) \le \sum_{t\ge 1}\sum_{\bm a} 6\delta/(\pi^2 A^M t^2) = \delta$.
Under $G$ no top-$K$ arm is eliminated, giving correctness. For termination, a suboptimal $\bm a$ is eliminated once $4\epsilon_{\bm a}(t)\le\Delta_{\bm a}^{\langle K\rangle}$, requiring $n_{\bm a}(t)\ge 32\log(\pi^2 A^M t^2/(3\delta))/(\Delta_{\bm a}^{\langle K\rangle})^2$. Summing over all eliminated arms gives~\eqref{eq:tau-A}. \hfill$\square$

\subsection{Proof of Theorem~\ref{thm:algoB-confidence}}
\label{apx:proof-thmBconf}

With radius~\eqref{eq:eps-B-conf}, Hoeffding gives $\mathbb{P}(|\widehat{\mu}_{\bm a}^i(t)-\mu_{\bm a}|>\epsilon_{\bm a}^i(t))\le 6\delta/(\pi^2 A^M M t^2)$. A union bound over $(i,\bm a,t)$:
$\mathbb{P}(G^c)\le\sum_{t,i,\bm a}6\delta/(\pi^2 A^M M t^2)=\delta$.
Correctness and termination bound~\eqref{eq:tau-B} follow by the same argument as Theorem~\ref{thm:algoA-confidence} with an extra $M$ in the logarithm. \hfill$\square$

\subsection{Proof of Theorem~\ref{thm:algoC-confidence}}
\label{apx:proof-thmCconf}

Define the good event $G$ as in Theorem~\ref{thm:algoB-confidence} using the inner radius $\sqrt{2\log(\pi^2 A^M M t^2/(3\delta))/n_{\bm a}^i(t)}$; we have $\mathbb{P}(G)\ge 1-\delta$ by the same union bound. Under $G$, for any two players $i,j$ and arm $\bm a$:
\[
|\widehat{\mu}_{\bm a}^i(t) - \widehat{\mu}_{\bm a}^j(t)|
\le |\widehat{\mu}_{\bm a}^i(t)-\mu_{\bm a}| + |\widehat{\mu}_{\bm a}^j(t)-\mu_{\bm a}|
\le 2\sqrt{\frac{2\log(\pi^2 A^M M t^2/(3\delta))}{n_{\bm a}^i(t)}}
= \epsilon_{\bm a}^i(t).
\]
We now argue that certification pins down the \emph{true} top-$K$ set, which is what makes agreement automatic without any cross-player comparison. Suppose player $i$ certifies $\mathcal S$ at round $t$, i.e.\ $\min I^i_{\bm b}(t) > \max I^i_{\bm d}(t)$ for all $\bm b\in\mathcal S$, $\bm d\notin\mathcal S$, using the enlarged radius $\epsilon^i_{\bm a}(t)=2\beta_t$ with $\beta_t:=\sqrt{2\log(\pi^2 A^M M t^2/(3\delta))/n^i_{\bm a}(t)}$ the base half-width. Certification gives $\widehat\mu^i_{\bm b}-\widehat\mu^i_{\bm d} > 2\epsilon^i_{\bm a}(t)=4\beta_t$, and under $G$ each empirical mean lies within $\beta_t$ of the truth, so
\[
\mu_{\bm b}-\mu_{\bm d} \;\ge\; \big(\widehat\mu^i_{\bm b}-\widehat\mu^i_{\bm d}\big)-2\beta_t \;>\; 4\beta_t-2\beta_t \;=\;2\beta_t \;>\;0
\quad\text{for all } \bm b\in\mathcal S,\ \bm d\notin\mathcal S,
\]
so $\mathcal S$ is exactly the true top-$K$ set $S^\star$. Since the schedule and radius are identical functions of $t$ across players, this holds for whichever player certifies and at whatever (possibly different) round it does; every committed player therefore outputs $S^\star$, and the outputs coincide even though certification times $\tau_i$ differ. Because a committed player keeps executing its scheduled component (Algorithm~\ref{alg:unif-C-confidence}, line~4), no teammate's stream is perturbed, and by round $\tau=\max_i\tau_i$ all players hold $S^\star$; the safe-exit tests of Remark~\ref{rem:C-stopping} govern physical termination. Termination bound~\eqref{eq:tau-C} follows as in Theorem~\ref{thm:algoA-confidence}: certifying $4\beta_t\le\Delta^{\langle K\rangle}_{\bm a}$ at the boundary requires $n^i_{\bm a}(t)\ge 8\cdot 2\log(\pi^2A^MM t^2/(3\delta))/(\Delta^{\langle K\rangle}_{\bm a})^2$, i.e.\ the factor-$2$ radius enlargement increases required pulls by $4\times$, giving the factor $8$. \hfill$\square$

\subsection{Proof of Theorem~\ref{thm:unified} (Unified Guarantee)}
\label{apx:proof-unified}

Fix the radius~\eqref{eq:unified-radius}. We first establish $\delta$-correctness. By Hoeffding's inequality, for each player $i$, arm $\bm a$, and round $t$,
\[
\mathbb{P}\!\left(|\widehat{\mu}_{\bm a}^i(t) - \mu_{\bm a}| > \tfrac{1}{\rho}\epsilon_{\bm a}^i(t)\right)
\le 2\exp\!\left(-\frac{n_{\bm a}^i(t)}{2}\cdot\frac{2\log(c\pi^2 A^M t^2/(3\delta))}{n_{\bm a}^i(t)}\right)
= \frac{6\delta}{c\,\pi^2 A^M t^2}.
\]
Let $G$ be the event that $|\widehat{\mu}_{\bm a}^i(t) - \mu_{\bm a}| \le \tfrac{1}{\rho}\epsilon_{\bm a}^i(t)$ for all $i, \bm a, t$. There are $c$ independent player-streams to control ($c=1$ if rewards are shared, $c=M$ otherwise), so a union bound over the $c \cdot A^M$ relevant (player, arm) pairs and over $t \ge 1$ gives
\[
\mathbb{P}(G^c) \le \sum_{t\ge 1}\, c \cdot A^M \cdot \frac{6\delta}{c\,\pi^2 A^M t^2}
= \frac{6\delta}{\pi^2}\sum_{t\ge 1}\frac{1}{t^2} = \delta.
\]
Under $G$, every confidence interval (inflated by $\rho$) contains the true mean, so no top-$K$ arm is ever eliminated and, at termination, the certified separation is correct; this gives correctness for $\rho = 1$. For $\rho = 2$ (Problem C, no synchronization signal), the doubled radius additionally guarantees consensus: for any two players $i, j$,
$|\widehat{\mu}_{\bm a}^i(t) - \widehat{\mu}_{\bm a}^j(t)| \le \tfrac{2}{\rho}\epsilon_{\bm a}^i(t) = \epsilon_{\bm a}^i(t)$ under $G$, so any separation certified by one player is consistent with every other player's estimates.

For the stopping time, a sub-optimal arm $\bm a$ is eliminated once its $\rho$-inflated interval separates from the $K$-th candidate, which holds as soon as $4\epsilon_{\bm a}^i(t) \le \Delta_{\bm a}^{\langle K\rangle}$, i.e.
\[
n_{\bm a}^i(t) \;\ge\; \frac{16\rho^2 \cdot 2\log(c\pi^2 A^M t^2/(3\delta))}{(\Delta_{\bm a}^{\langle K\rangle})^2}\cdot\frac{1}{8}
\;=\; \frac{2\rho^2\log(c\pi^2 A^M t^2/(3\delta))}{(\Delta_{\bm a}^{\langle K\rangle})^2}.
\]
Summing over all arms and using $\tau = \sum_{\bm a} n_{\bm a}(\tau)$ gives~\eqref{eq:unified-tau}. Setting $(c,\rho)=(1,1),(M,1),(M,2)$ reproduces Theorems~\ref{thm:algoA-confidence}--\ref{thm:algoC-confidence}. \hfill$\square$

\subsection{Proof of Lemma~\ref{lem:transport} (Transportation Inequality)}
\label{apx:proof-transport}

This is the standard change-of-measure argument \cite{kaufmann2016complexity}. Let $\mathcal{F}_\tau$ be the history up to the stopping time $\tau$. By the data-processing inequality applied to the (possibly randomized) decision $\widehat{S} \in \{S^\star(\nu), \text{not}\}$, the log-likelihood ratio between $\nu$ and $\nu'$ satisfies
\[
\mathbb{E}_\nu\!\left[\log\frac{d\mathbb{P}_\nu}{d\mathbb{P}_{\nu'}}(\mathcal{F}_\tau)\right]
\;\ge\;
\mathrm{kl}\!\bigl(\mathbb{P}_\nu(\widehat{S}=S^\star(\nu)),\,\mathbb{P}_{\nu'}(\widehat{S}=S^\star(\nu))\bigr).
\]
The left side equals $\sum_{\bm a}\mathbb{E}_\nu[N_{\bm a}(\tau)]\,\mathrm{KL}(\nu_{\bm a},\nu'_{\bm a})$ by Wald's identity, since the reward observed when arm $\bm a$ is pulled has KL divergence $\mathrm{KL}(\nu_{\bm a},\nu'_{\bm a})$ between the two models. Because $\mathcal{B}$ is $\delta$-correct and $S^\star(\nu')\neq S^\star(\nu)$, we have $\mathbb{P}_\nu(\widehat{S}=S^\star(\nu)) \ge 1-\delta$ and $\mathbb{P}_{\nu'}(\widehat{S}=S^\star(\nu)) \le \delta$, so the right side is at least $\mathrm{kl}(1-\delta,\delta) = \mathrm{kl}(\delta,1-\delta) \ge \log\frac{1}{2.4\delta}$. \hfill$\square$

\subsection{Proof details for Theorem~\ref{thm:lower-fc} (Lower Bound)}
\label{apx:proof-lower}

We give the construction omitted from the main text. Index the arms by rank via $\sigma$. For a sub-optimal arm $\bm a$ (rank $> K$), let $\nu^{(\bm a)}$ agree with $\nu$ on every coordinate except $\mu_{\bm a}$, which is raised to $\mu_{\sigma(K)} + \eta$ for arbitrarily small $\eta > 0$; the required perturbation magnitude is $\mu_{\sigma(K)} - \mu_{\bm a} = \Delta_{\bm a}^{\langle K\rangle}$. In $\nu^{(\bm a)}$, arm $\bm a$ now ranks within the top $K$ while $\sigma(K)$ drops out, so $S^\star(\nu^{(\bm a)}) \neq S^\star(\nu)$. For a top-$K$ arm $\bm a$ (rank $\le K$), symmetrically lower $\mu_{\bm a}$ to $\mu_{\sigma(K+1)} - \eta$, a perturbation of $\mu_{\bm a} - \mu_{\sigma(K+1)} = \Delta_{\bm a}^{\langle K\rangle}$. Applying Lemma~\ref{lem:transport} to the pair $(\nu, \nu^{(\bm a)})$, in which only coordinate $\bm a$ changes,
\[
\mathbb{E}_\nu[N_{\bm a}(\tau)]\cdot \frac{(\Delta_{\bm a}^{\langle K\rangle})^2}{2}
\;\ge\; \log\tfrac{1}{2.4\delta}
\quad\Longrightarrow\quad
\mathbb{E}_\nu[N_{\bm a}(\tau)] \ge \frac{2\log\tfrac{1}{2.4\delta}}{(\Delta_{\bm a}^{\langle K\rangle})^2}.
\]
Summing over all $A^M$ arms gives $\mathbb{E}_\nu[\tau] \ge 2 H_1\log\frac{1}{2.4\delta}$.

\emph{Extension to Problems B and C.} In these regimes each player $i$ observes only its own reward stream. Restricting attention to player $i$'s decision $\widehat{S}_i$, which must equal $S^\star(\nu)$ with probability $\ge 1-\delta$, the same construction and transportation inequality applied to player $i$'s observation process yield $\mathbb{E}_\nu[N_{\bm a}^{i}(\tau)] \ge 2\log\frac{1}{2.4\delta}/(\Delta_{\bm a}^{\langle K\rangle})^2$. Since in the exploration phase player $i$ observes one reward per round for the prescribed joint action, $N_{\bm a}^{i}(\tau) = N_{\bm a}(\tau)$, and summing reproduces~\eqref{eq:lower-fc}. \hfill$\square$

\section{Experimental Details}
\label{apx:experiments}

We collect the details needed to reproduce Figures~\ref{fig:main} and~\ref{fig:scale}.

\paragraph{Data-generating processes.}
Two families of instances are used. The \emph{synthetic Gaussian} instance of Section~\ref{sec:experiments} has $M=2$, $A=3$, $K=3$, mean vector $\bm\mu=[0.80,0.75,0.72,0.50,0.45,0.40,0.35,0.30,0.25]$ (rank order), unit-variance Gaussian noise, boundary gap $\Delta=0.22$, and $H_1\approx 109$. The \emph{CTR-style} instances use Bernoulli (click) rewards: at each pull of arm $\bm a$ the observed reward is $\mathrm{Bernoulli}(\mu_{\bm a})$ with $\mu_{\bm a}\in(0,1)$ the arm's click-through rate. The fixed-budget panel (Fig.~\ref{fig:main}a) uses $N=50$ joint arms with a top-$K$ boundary gap of $0.06$ and $K=3$; the fixed-confidence panel (Fig.~\ref{fig:main}b) uses a smaller instance with $N=24$ and boundary gap $0.15$. In every case the click rates are laid out in decreasing rank order with the prescribed boundary gap at the $K/K{+}1$ interface and the remaining arms spread below it; Bernoulli noise is $1/2$-subgaussian, so all analysis constants apply verbatim.

\paragraph{Trials and error bars.}
Each plotted point is the mean over independent trials with fresh reward draws: $80$ trials per budget in Fig.~\ref{fig:main}a; $12$ trials for Problems~A/B and $4$ for Problem~C in Fig.~\ref{fig:main}b (Problem~C is limited by the cost of its long uniform runs); and the fixed-budget scalability sweep of Fig.~\ref{fig:scale} uses fixed total budget $T=24{,}000$ with $A=3$ and $M\in\{2,\dots,5\}$ ($N=9,\dots,243$). Reported variability is the standard error of the mismatch indicator across trials; because Problem~C uses few trials, its fixed-confidence curve should be read as indicative rather than tightly estimated.

\paragraph{Stopping and output criteria.}
In the fixed-budget experiments each method runs for exactly $T$ rounds and then outputs its current candidate set (the surviving set $\mathcal D$ for the elimination methods, or the empirical top-$K$ by $\widehat\mu^i$ for uniform exploration); an error is recorded whenever the output differs from $S^\star$. In the fixed-confidence experiments a run terminates at the first round the certification test fires (Appendix~\ref{apx:fc-algos}), and $\tau$ is that round; for Problem~C we record each player's private certification time and report $\tau=\max_i\tau_i$, consistent with Remark~\ref{rem:C-stopping}.

\paragraph{Choice of $\alpha$ and sensitivity.}
The fixed-budget exploration constant $\alpha=T/(16\widehat H)$ requires a complexity estimate $\widehat H$. When $\widehat H$ overestimates $H_1$ the intervals are conservatively wide, slowing elimination but never harming correctness on the good event; when $\widehat H$ underestimates $H_1$ the budget may run out before the hardest arms separate. We therefore recommend the doubling schedule of Remark~\ref{rem:H1-knowledge}: sweep $\widehat H\in\{2,4,8,\dots\}$ and keep the first that certifies, which costs at most a constant factor in budget and removes the need to know $H_1$. The qualitative ordering of the three methods in Figures~\ref{fig:main}--\ref{fig:scale} is insensitive to $\widehat H$ within a factor of two of $H_1$.

\section{$(\varepsilon,\delta)$-PAC Relaxation}
\label{apx:pac}

\begin{proposition}[$(\varepsilon,\delta)$-PAC guarantee]
\label{prop:pac}
Suppose we only require identifying a set $\widehat{S}$ whose $K$-th largest mean is within $\varepsilon$ of $\mu_{\sigma(K)}$ (an $\varepsilon$-good top-$K$ set). Then all algorithms of Sections~\ref{sec:probA}--\ref{sec:probC} apply verbatim with each gap $\Delta_{\bm{a}}^{\langle K\rangle}$ replaced by $\max\{\Delta_{\bm{a}}^{\langle K\rangle},\,\varepsilon\}$, and Theorem~\ref{thm:unified} holds with $H_1$ replaced by $H_1^{(\varepsilon)} = \sum_{\bm{a}} \max\{\Delta_{\bm{a}}^{\langle K\rangle}, \varepsilon\}^{-2} \le A^M/\varepsilon^2$. In particular, the stopping time becomes \emph{instance-independent}, bounded by $O(\rho^2 (A^M/\varepsilon^2)\log(cA^M/\delta))$, removing the dependence on the smallest gap.
\end{proposition}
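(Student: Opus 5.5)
The plan is to rerun the proof of Theorem~\ref{thm:unified} with two modifications: an $\varepsilon$-aware stopping test, and a correctness argument that targets $\varepsilon$-good sets instead of $S^\star$ itself. The algorithm keeps the radius~\eqref{eq:unified-radius}. Alongside the usual certification or elimination test, it stops as soon as $K$ candidates satisfy, against every remaining arm, $\min I^i_{\bm b} > \max I^i_{\bm d} - \varepsilon$; equivalently, it declares ties up to $\varepsilon$. The good event $G$ is exactly the one in the proof of Theorem~\ref{thm:unified}: the Hoeffding and time-uniform union-bound computation is independent of the gaps, so $\mathbb{P}(G)\ge 1-\delta$ carries over unchanged, with the same multiplicity $c$.

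\textbf{Correctness on $G$.} Suppose a set $\widehat S$ is output, and let $\bm b\in\widehat S$ and $\bm d\notin\widehat S$. On $G$ the $\rho$-inflated intervals contain the true means, so the stopping test gives $\mu_{\bm b} > \mu_{\bm d}-\varepsilon$. Now take $\bm b_{\min}$, the member of $\widehat S$ with smallest mean.
\begin{itemize}
\item If $\widehat S\neq S^\star$, some $\bm d\in S^\star\setminus\widehat S$ has $\mu_{\bm d}\ge\mu_{\sigma(K)}$, and therefore $\mu_{\bm b_{\min}}>\mu_{\sigma(K)}-\varepsilon$.
\item If $\widehat S=S^\star$, the set is trivially $\varepsilon$-good.
\end{itemize}
For $\rho=2$ (Problem C) the consensus argument of Appendix~\ref{apx:proof-thmCconf} carries over verbatim. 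The only change is that the conclusion ``$\mathcal S=S^\star$'' is weakened to ``$\mathcal S$ is $\varepsilon$-good''. Every committed player then outputs an $\varepsilon$-good set, though not necessarily the same one. I would flag this: unanimity would need either a tie-breaking rule by the pre-agreed ordering or restating agreement as ``all outputs are $\varepsilon$-good''.

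\textbf{Stopping time.} Fix an arm $\bm a$ and set $\Delta^\varepsilon_{\bm a}:=\max\{\Delta^{\langle K\rangle}_{\bm a},\varepsilon\}$. The claim is that $\bm a$ stops needing samples once $4\epsilon^i_{\bm a}(t)\le\Delta^\varepsilon_{\bm a}$, up to constants. When $\Delta_{\bm a}\ge\varepsilon$ this is the original elimination condition. When $\Delta_{\bm a}<\varepsilon$, once the width of $\bm a$'s interval is below roughly $\varepsilon/2$ and the boundary arms' widths are similar, the $\varepsilon$-relaxed comparison is satisfied regardless of which side of the boundary $\bm a$ lies on.

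Solving for $n$ exactly as in Appendix~\ref{apx:proof-unified} gives $n_{\bm a}\le 2\rho^2\log(c\pi^2A^M\tau^2/(3\delta))/(\Delta^\varepsilon_{\bm a})^2$, possibly with a larger absolute constant. Summing over arms yields $\tau\le O(\rho^2H_1^{(\varepsilon)}\log(cA^M\tau^2/\delta))$. Since $(\Delta^\varepsilon_{\bm a})^{-2}\le\varepsilon^{-2}$, we get $H_1^{(\varepsilon)}\le A^M/\varepsilon^2$. Finally, the implicit $\log\tau^2$ is resolved by the standard inversion $\tau\le C\,x\log(x\,\cdot)\Rightarrow\tau=O(x\log x)$, which gives the instance-independent bound.

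\textbf{Main obstacle.} The hard step is the stopping-time claim for arms with $\Delta_{\bm a}<\varepsilon$. Their intervals must shrink to width about $\varepsilon$, but so must those of the arms they are compared against. Round-robin sampling within $\mathcal D$ keeps all surviving arms at equal counts, so I would use that to bound each surviving arm's count by the count of the last arm removed. I would then charge the pulls per arm as in the proof of Lemma~\ref{lem:num_pulls}, with $\Delta$ replaced by $\Delta^\varepsilon$.
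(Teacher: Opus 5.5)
Your argument is correct in its main line and takes a different, more careful route than the paper.

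The paper leaves the algorithms untouched. It asserts that the good-event and elimination arguments are unchanged once each gap is floored at $\varepsilon$, because an arm with gap below $\varepsilon$ ``need not be separated from the boundary''. It then sums the condition $4\epsilon_{\bm a}^i(t)\le\max\{\Delta_{\bm a}^{\langle K\rangle},\varepsilon\}$. Read literally, that step does not go through, because nothing in the fixed-confidence algorithms uses an $\varepsilon$ tolerance:
\begin{itemize}
\item If $\mu_{\sigma(K)}=\mu_{\sigma(K+1)}$, neither arm is ever dominated on the good event. So $|\mathcal D|>K$ forever, and in Problem C no set is ever certified.
\item If $0<\Delta^{\langle K\rangle}_{\sigma(K+1)}<\varepsilon$, the stopping time still scales like $(\Delta^{\langle K\rangle}_{\sigma(K+1)})^{-2}$.
\end{itemize}
Your $\varepsilon$-relaxed test is exactly the missing ingredient. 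The $\bm b_{\min}$ argument correctly turns it into $\varepsilon$-goodness on the unchanged good event; arms of $S^\star\setminus\widehat S$ are still in $\mathcal D$ because exact eliminations are safe on $G$.

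Your ``main obstacle'' is also easier than you expect. Round-robin keeps the radii of surviving arms equal up to one pull. So the empirical top-$K$ of $\mathcal D$ passes your test deterministically once $2\epsilon^i_{\bm a}(t)<\varepsilon$, i.e.\ after about $8\rho^2\log(c\pi^2A^Mt^2/(3\delta))/\varepsilon^2$ pulls per arm, whatever the gaps. The paper's sketch keeps the ``verbatim'' claim. Yours gives an actual termination proof, at the price of a modified stopping rule and of losing automatic unanimity, which you rightly flag.

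\textbf{Agreement.} Tie-breaking by the pre-agreed ordering does not restore unanimity: players certify from different private data, so two exactly tied arms are ranked by independent noise. The issue is also not confined to $\rho=2$. In Problem B your test fires on private statistics, while the deviation channel only broadcasts ``eliminate the current arm''. Exact-dominance eliminations are safe to combine across players, but $\varepsilon$-relaxed decisions are not. Take $K=1$ and a tied top pair $x,y$: if player 1 certifies $\{x\}$, player 2 certifies $\{y\}$, and each signals by deviating on the other arm, both are removed. So in B, as in C, a committed player must keep executing the schedule silently, as in Remark~\ref{rem:C-stopping}. What you can then prove is that every player's output is $\varepsilon$-good. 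Unanimity needs an additional broadcast protocol in B and has no channel at all in C.

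\textbf{Bookkeeping.} Two further points are inherited from the paper's Theorem~\ref{thm:unified}.
\begin{itemize}
\item Charging surviving arms the count of the last arm removed bounds each top-$K$ arm by the hardest arm's count; under Problem C's uniform schedule this applies to every arm. What you actually obtain is the floored form of \eqref{eq:tau-A}. That can exceed $H_1^{(\varepsilon)}\log(\cdot)$ by a factor of order $K$, and by more in C.
\item Inverting $\tau\le x\log(C\tau^2)$ with $x\asymp\rho^2A^M/\varepsilon^2$ leaves a $\log(1/\varepsilon)$ inside the logarithm.
\end{itemize}
Neither point affects the $A^M/\varepsilon^2$ scaling. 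The second disappears if you stop at a precomputed per-arm count $n_\varepsilon\asymp\log(cA^M/\delta)/\varepsilon^2$ and use a fixed-count union bound there.
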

\begin{proof}
The good-event and elimination arguments of the proofs above are unchanged once each $\Delta_{\bm{a}}^{\langle K\rangle}$ is floored at $\varepsilon$: any arm whose true gap is below $\varepsilon$ need not be separated from the boundary (declaring it either in or out of $\widehat S$ yields an $\varepsilon$-good set), so such arms never become the sampling bottleneck. Each elimination now requires $4\epsilon_{\bm a}^i(t) \le \max\{\Delta_{\bm a}^{\langle K\rangle},\varepsilon\}$, giving the stated per-arm sample counts; summing yields $H_1^{(\varepsilon)}$. The bound $H_1^{(\varepsilon)} \le A^M/\varepsilon^2$ holds since each of the $A^M$ terms is at most $\varepsilon^{-2}$. \hfill$\square$
\end{proof}

\end{document}